\newtheorem{theorem}{Theorem}[section]
\newtheorem{lemma}[theorem]{Lemma}
\begin{document}

%%
%% The "title" command has an optional parameter,
%% allowing the author to define a "short title" to be used in page headers.
\title{Layer-wise Adaptive Gradient Norm Penalizing Method for Efficient and Accurate Deep Learning}

%%
%% The "author" command and its associated commands are used to define
%% the authors and their affiliations.
%% Of note is the shared affiliation of the first two authors, and the
%% "authornote" and "authornotemark" commands
%% used to denote shared contribution to the research.
\author{Sunwoo Lee}
\orcid{0000-0001-6334-3068}
\affiliation{%
  \institution{Inha University}
  \department{Department of Computer Engineering}
  \streetaddress{100, Inha-ro, Michuhol-gu,}
  \city{Incheon}
  %\state{Ohio}
  \country{Republic of Korea}
  \postcode{22212}
}
\email{sunwool@inha.ac.kr}

%%
%% By default, the full list of authors will be used in the page
%% headers. Often, this list is too long, and will overlap
%% other information printed in the page headers. This command allows
%% the author to define a more concise list
%% of authors' names for this purpose.
\renewcommand{\shortauthors}{Sunwoo Lee}

%%
%% The abstract is a short summary of the work to be presented in the
%% article.
\begin{abstract}
Sharpness-aware minimization (SAM) is known to improve the generalization performance of neural networks. However, it is not widely used in real-world applications yet due to its expensive model perturbation cost. A few variants of SAM have been proposed to tackle such an issue, but they commonly do not alleviate the cost noticeably. In this paper, we propose a lightweight layer-wise gradient norm penalizing method that tackles the expensive computational cost of SAM while maintaining its superior generalization performance. Our study empirically proves that the gradient norm of the whole model can be effectively suppressed by penalizing the gradient norm of only a few critical layers. We also theoretically show that such a partial model perturbation does not harm the convergence rate of SAM, allowing them to be safely adapted in real-world applications. To demonstrate the efficacy of the proposed method, we perform extensive experiments comparing the proposed method to mini-batch SGD and the conventional SAM using representative computer vision and language modeling benchmarks.
\end{abstract}

%%
%% The code below is generated by the tool at http://dl.acm.org/ccs.cfm.
%% Please copy and paste the code instead of the example below.
%%
\begin{CCSXML}
<ccs2012>
<concept>
<concept_id>10010147.10010257.10010293.10010294</concept_id>
<concept_desc>Computing methodologies~Neural networks</concept_desc>
<concept_significance>500</concept_significance>
</concept>
</ccs2012>
\end{CCSXML}

\ccsdesc[500]{Computing methodologies~Neural networks}

%%
%% Keywords. The author(s) should pick words that accurately describe
%% the work being presented. Separate the keywords with commas.
\keywords{Deep Learning, Sharpness-Aware Minimization, Layer-wise}
%% A "teaser" image appears between the author and affiliation
%% information and the body of the document, and typically spans the
%% page.

%\received{20 February 2007}
%\received[revised]{12 March 2009}
%\received[accepted]{5 June 2009}

%%
%% This command processes the author and affiliation and title
%% information and builds the first part of the formatted document.
\maketitle

\section {Introduction} \label{sec:intro}
Sharpness-aware minimization (SAM) \cite{foret2020sharpness} improves the generalization performance of Machine Learning models by penalizing the norm of gradients.
The reduced gradient norm results in leading the model to a flat region of the parameter space \cite{zhao2022penalizing}.
It is already widely known that the convergence to a flat region helps improve the generalization performance \cite{hochreiter1997flat,keskar2016large,wen2018smoothout,chaudhari2019entropy,baldassi2020shaping,yao2020pyhessian,petzka2021relative,lee2023achieving}.

Despite its superior generalization performance, SAM is not popularly used in real-world ML applications yet due to the expensive computational cost.
Specifically, it calculates the gradients twice, one for the model perturbation (gradient ascent) and the other for the model update (gradient descent).
Thus, it is crucial to develop an efficient gradient norm penalizing method to allow real-world applications to adopt SAM and enjoy improved generalization performance.

A few recent studies tackle the expensive computational cost issue in SAM.
Du et al. perturb a random subset of the model parameters to reduce the extra computational cost \cite{du2021efficient}.
While this work presents promising results, they evaluated the performance only using the parameter-wise perturbation probability of $0.5 \sim 0.6$.
Thus, it is not clear whether the generalization performance can be maintained when the fraction of the perturbed model is small.
Liu et al. directly reduce the gradient ascent cost by reusing the gradient ascent for multiple iterations \cite{liu2022towards}.
However, the gradient can be safely reused only when the dataset gives stable gradients for many iterations.
That is, the effectiveness of the periodic gradient ascent computations depends on the given data characteristics, making it less practical.
Mi et al. find a critical subset of the model parameters based on Fisher information and perturb them only\cite{mi2022make}.
Although this work provides critical insight into the partial model perturbation approach, they first calculate the full gradient ascent and then apply a sparse filter, having a limited performance gain.

In this paper, we propose a novel layer-wise adaptive gradient norm penalizing method that tackles the expensive model perturbation cost issue in SAM while maintaining its superior generalization performance.
Instead of perturbing the whole model parameters, we propose to selectively perturb only a few network layers that strongly affect the model's generalization performance.
Our empirical study shows that a few layers consistently have a larger gradient norm than other layers during training.
Interestingly, many networks tend to have a large gradient norm at the output-side layers.
By leading those layers to a flat region in the parameter space, it is expected that the entire model will have a much smaller gradient norm, thereby improving its generalization performance.
Our theoretical analysis also shows that such a layer-wise method does not harm the convergence rate regardless of which and how many layers are selected to be perturbed.

Our method is differentiated from the existing SAM variants in the sense that it exploits the internal data characteristics of neural networks to make the best trade-off between the generalization performance and the extra computational cost, rather than just randomly perturbing a subset of model parameters.
First, we empirically prove that only a subset of the model parameters strongly affect the model's generalization performance.
The proposed layer-wise method effectively suppresses the gradient norm of the whole model parameters and it results in achieving the improved generalization performance.
Second, we also show that the gradient norm of the entire model can be reduced at a marginal extra computational cost by the layer-wise adaptive model perturbation.
Our proposed method entirely skips the gradient computation at the layers that are not perturbed.
This approach not only mitigates the extra computational cost but also eases the implementation complexity, making SAM more practical.

To demonstrate the efficacy of our layer-wise gradient penalizing method, we compare it to the SOTA variants of SAM as well as the conventional mini-batch SGD.
Our experimental results show that the full model gradient norm can be effectively penalized by the proposed layer-wise method, and it results in achieving similar accuracy to the full model perturbation method (SAM).
In addition, the proposed method pushes the training throughput towards that of the conventional mini-batch SGD.
%The proposed method significantly reduces the model perturbation cost, and the performance gain differs across model architectures and datasets.
For example, when training ResNet20 on CIFAR-10, our method achieves a similar validation accuracy to SAM while it has just $11\%$ reduced throughput as compared to the mini-batch SGD.
Our empirical and theoretical studies demonstrate that the gradient norm penalizing method can be efficiently employed in real-world applications.

\section {Related Work} \label{sec:related}
%\subsection {Sharpness-Aware Minimization Methods}
Foret et al. proposed Sharpness-Aware Minimization, an optimization method that leads the model toward a flat region in the parameter space so that it achieves better generalization performance \cite{foret2020sharpness}.
Zheng et al. also concurrently proposed a similar method in \cite{zheng2021regularizing}.
After its superior generalization performance had been observed, several researchers analyzed the theoretical performance to better understand its behaviors.
Andriushchenko et al. presented a thorough theoretical analysis of the SAM's performance in \cite{andriushchenko2022towards}.
Bartlett et al. analyzed the dynamics of SAM for convex quadratic functions \cite{bartlett2023dynamics}.
Zhao et al. introduced a generalized version of SAM, which allows finer-grained model perturbation with another hyper-parameter \cite{zhao2022penalizing}.
Zhoe et al. analyzed the behaviors of SAM for problems with class imbalance issues \cite{zhou2023imbsam}.
Recently, Caldarola et al. applied SAM to Federated Learning, showing a practical use-case of it \cite{caldarola2022improving}.

Recently, several variants of SAM have been proposed.
Kwon et al. developed adaptive SAM, which alleviates the sensitivity to parameter re-scaling by adaptively adjusting the model perturbation terms \cite{kwon2021asam}.
Zhang et al. designed Gradient Norm-Aware Minimization which boosts up the generalization performance by utilizing approximated second-order information \cite{zhang2023gradient}.
Zhuang et al. proposed an alternative way of leading the model to a flat region in the parameter space by using a customized metric, surrogate gap \cite{zhuang2022surrogate}.
Mi et al. found critical parameters based on Fisher information and dropped the gradient of the rest of the parameters in the gradient ascent step to stabilize the training \cite{mi2022make}.
Jiang et al. accelerated SAM by applying adaptive learning rate adjustment and theoretically analyzed its performance \cite{jiang2023adaptive}.
However, these studies do not discuss the expensive model perturbation cost in SAM.

Liu et al. efficiently perturbed model parameters by re-using the gradients multiple times \cite{liu2022towards}.
Du et al. perturbed a random subset of the model parameters to save the model perturbation cost \cite{du2021efficient}.
Qu et al. applied SAM to Federated Learning and demonstrated that the client-side SAM can improve the generalization performance of the global model \cite{qu2022generalized}.
Unfortunately, these methods either do not noticeably reduce the computational cost or reduce the computational cost only when the dynamics of the stochastic gradients remain stable.

\section {Layer-Wise Adaptive Gradient Norm Penalizing Method} \label{sec:method}

\subsection {Problem Setting}
We consider an empirical risk minimization problem with a special loss function that penalizes the gradient norm:
\begin{align}
    L(w) := L_S(w) + \lambda \| \nabla L_S(w) \|_p. \label{eq:loss}
\end{align}
In the above equation, $\| \cdot \|_p$ denotes the $L_p$ norm, $N$ is the number of training samples, and $L_S(w)$ is the empirical loss function $\frac{1}{N} \sum_{i=1}^{N} l(w, \xi_i)$ where $\xi_i$ is a training data sample $i$.
The coefficient $\lambda$ determines how strongly the gradient norm is penalized.
We focus on the $L_2$ norm in this work.

Several previous works already showed how to calculate the gradient of (\ref{eq:loss}) \cite{andriushchenko2022towards,zhao2022penalizing,liu2022towards}.
As shown in \cite{zhao2022penalizing}, the gradient of (\ref{eq:loss}) is calculated as follows.
\begin{align}
    \nabla L(w) = (1 - \alpha) \nabla L_S(w) + \alpha \nabla L_S \left( w + \rho \frac{\nabla L_S(w)}{\| \nabla L_S(w) \|} \right), \label{eq:grads}
\end{align}
where $\rho$ is a small constant used to approximate the second term of the right-hand side and $\alpha = \frac{\lambda}{\rho}$.
Note that \lq{}sharpness-aware minimization\rq{}, SAM, is a special case where $\alpha = 1$.
A majority of existing works simplify (\ref{eq:grads}) by omitting the gradient normalization to provide the convergence guarantee \cite{andriushchenko2022towards,mi2022make,jiang2023adaptive,sun2023adasam} as follows.
\begin{align}
    \nabla L(w) = (1 - \alpha) \nabla L_S(w) + \alpha \nabla L_S \left( w + r \nabla L_S(w) \right), \label{eq:grads2}
\end{align}
where $r$ is a small constant.
Following such a convention, we consider the simplified version of the gradients in the rest of the paper.

\subsection {Layer-Wise Gradient Norm Penalizing Method}

To obtain $\nabla L(w)$ shown in (\ref{eq:grads}), the gradient should be calculated twice, doubling the computational cost of training.
We propose a layer-wise gradient norm penalizing method to tackle this critical issue.
The layer-wise approach has been recently investigated in a several different contexts~\cite{schulz2012deep,you2018imagenet,ma2022layer,lee2023partial,lee2023layer}.
Instead of penalizing the norm of the total gradients as (\ref{eq:loss}), we penalize the norm of only a few chosen layers using the loss function is as follows.
\begin{align}
    L(w) := L_S(w) + \lambda \| \nabla L_S(w') \|_p, \label{eq:loss2} \\
    w' \subset w = \{w_1, \cdots, w_L \} \label{eq:subset},
\end{align}
where $L$ is the number of network layers.
The $w$ is the full model that contains $L$ layers and $w_i$ is the $i^{th}$ layer.
Consequently, the layer-wise gradient of (\ref{eq:loss2}) for the layers in $w'$ becomes as follows.
\begin{align}
    \nabla L(w_i) &= (1 - \alpha) \nabla L_S(w_i) + \alpha \nabla L_S \left( w_i + r \nabla L_S(w_i) \right). \label{eq:grads3}
\end{align}
The layer-wise gradient of all the other layers is just simply $\nabla L(w_i) = \nabla L_S(w_i)$.
Unless every layer equally contributes to the models' generalization performance, there should exist subsets of layers $w'$ that give better generalization performance than other subsets.
Our goal is to find such subsets during training and apply the gradient norm penalizing method only to them so that the model achieves good generalization performance at a marginal model perturbation cost.

\subsection {Convergence Analysis} \label{sec:theory}
Before we discuss how to determine which layers to perturb, we first provide a performance guarantee of the layer-wise gradient penalizing method.
We analyze the convergence properties of the proposed method for smooth and non-convex optimization problems.
Our analysis assumes the following conditions on the loss function $\ell_i(w) := \ell (w, \xi_i), i \in \{1, \cdots, N \}$, where $N$ is the number of training samples and $\xi_i$ is the $i^{th}$ training sample.
\\
\textbf{Assumption 1.} \textit{(Bounded variance): There exists a constant $\sigma \geq 0$ that satisfies $\mathbb{E}[\| \nabla \ell_i(w) - \nabla L(w) \|^2]  \leq \sigma^2$ for all $i \in \{1, \cdots, N\}$ and $w \in \mathbb{R}^d$.}
\\
\textbf{Assumption 2.} \textit{($\beta$-smoothness): There exists a constant $\beta \geq 0$ that satisfies $\| \nabla \ell_i(u) - \nabla \ell_i(v) \| \leq \beta \| u - v \| $ for all $i \in \{1, \cdots, N\}$ and $w \in \mathbb{R}^d$.}

%We additionally define the magnitude of the gradients of $w'$ as follows.
%\begin{align}
%    \| \nabla L( w' ) \|^2 = \sum_{i \in w'} \| \nabla L(w_i) \|^2 = \kappa \| \nabla L(w) \|^2, 0 \leq \kappa \leq 1 \nonumber
%\end{align}
%The $\kappa$ is defined based on the model architecture and which layers are selected to be perturbed.
%That is, as $k$ increases, $\kappa$ will also increase according to the number of parameters at the chosen layers.
%Without loss in generality, we define $\kappa$ as a ratio of the number of perturbed parameters to that of the total parameters.
%In this way, our analysis can cover any possible model architecture and partial model perturbation schemes.
%Now we first analyze the impact of perturbing a few important layers, $w'$, only.
%\\
%\textbf{Lemma 1.} \textit{We consider the classical SAM which uses the same mini-batch when calculating the gradient ascent and the gradient descent.
%Then, given a $\beta$-smooth loss function $L(x)$, we have the following bound for any $r > 0$, any $0 \leq \kappa \leq 1$, and $x \in \mathbb{R}^d$.}
%\begin{align}
%    \mathbb{E} \left[ \langle \nabla L_{t+1}(w + r \nabla L_{t+1}(w')), \nabla L(w) \rangle \right] &\geq \left(\frac{1}{2} - r\beta \kappa \right) \| \nabla L(w) \|^2 \nonumber \\
%    &\hspace{1cm}- \frac{\beta^2 r^2 \sigma^2}{2b} \nonumber
%\end{align}
%The left-hand side is under-bounded by a smaller value as $\kappa$ approaches $1$.

Then, we present the performance guarantee of mini-batch SGD when the proposed layer-wise gradient norm penalizing method is applied as follows. 
\\
\textbf{Theorem 1.} \textit{Assume the $\beta$-smooth loss function and the bounded gradient variance. Then, if $\eta \leq \frac{1}{2\beta}$ and $r \leq \frac{1}{2\beta}$, Algorithm \ref{alg:LASAM} with a batch size of $b$ satisfies:}
\begin{align}
    \frac{1}{T} \sum_{t=0}^{T-1} \mathbb{E} \left[ \| \nabla L(w_t) \|^2 \right] &\leq \frac{4}{T\eta} \left( L(w_0) - \mathbb{E}\left[ L(w_T) \right] \right) \nonumber \\
    &\hspace{1cm}+ 4\left(\eta \beta + \beta^2 r^2 \right) \frac{\sigma^2}{b}. \label{theorem:sigma}
\end{align}
See the Appendix for proof.
\\
\textbf{Remark 1.} (Finite Horizon Result) With diminishing $\eta$ and $r$ such that $\eta = \frac{1}{\beta \sqrt{T}}$ and $r = \frac{1}{\beta \sqrt[4]{T}}$ we guarantee the convergence of Algorithm \ref{alg:LASAM} as follows.
\begin{align}
    \frac{1}{T} \sum_{t=0}^{T-1} \mathbb{E} \left[ \| \nabla L(w_t) \|^2 \right] &\leq \frac{4\beta}{\sqrt{T}} \left( L(w_0) - \mathbb{E}\left[ L(w_T) \right] \right) \nonumber \\
    & \hspace{1cm} + \frac{8 \sigma^2}{b \sqrt{T}}. \nonumber
\end{align}
Thus, with a sufficiently small diminishing $\eta$ and $r$, the right-hand side has a complexity of $\mathcal{O}(\frac{1}{\sqrt{T}})$, which is the same as the typical convergence complexity of SAM \cite{andriushchenko2022towards,sun2023adasam,jiang2023adaptive}.
\\
\textbf{Remark 2.} (Layer-wise Model Perturbation) The result shows that the convergence rate is maintained regardless of which layers are perturbed.
Unless the gradient goes to zero at all the layers that are not perturbed, we can expect that the second term on the right-hand side in (\ref{theorem:sigma}) will be most likely smaller than the derived bound in practice, making it converge faster.
Thus, we can conclude that users have the freedom to choose which layers to perturb, focusing only on how to strike a good balance between generalization performance and the model perturbation cost.

Now, in the following subsection, let us focus on how to find the minimal set of layers $w'$ that achieves similar generalization performance to the full gradient norm penalizing method.

\begin{figure}[t]
\centering
\includegraphics[width=\columnwidth]{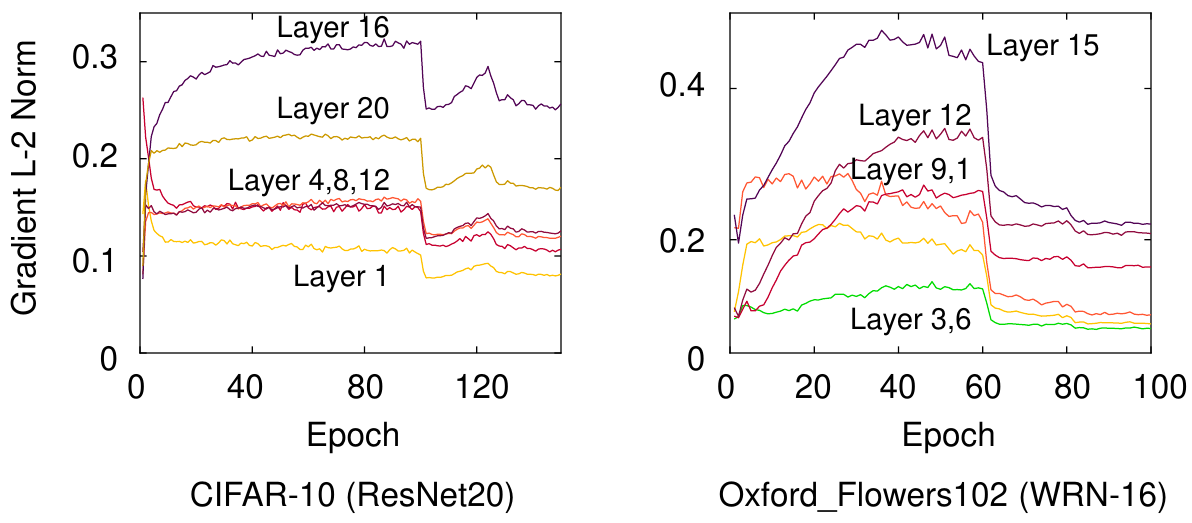}
\caption{
    The layer-wise gradient norm curves of a) CIFAR-10 (ResNet20) training and b) Oxford\_flowers102 (Wide-ResNet16).
    All the layers show consistent gradient norms throughout the training epochs.
}
\label{fig:layer_norm}
\end{figure}

\begin{figure*}[t]
\centering
\includegraphics[width=1.3\columnwidth]{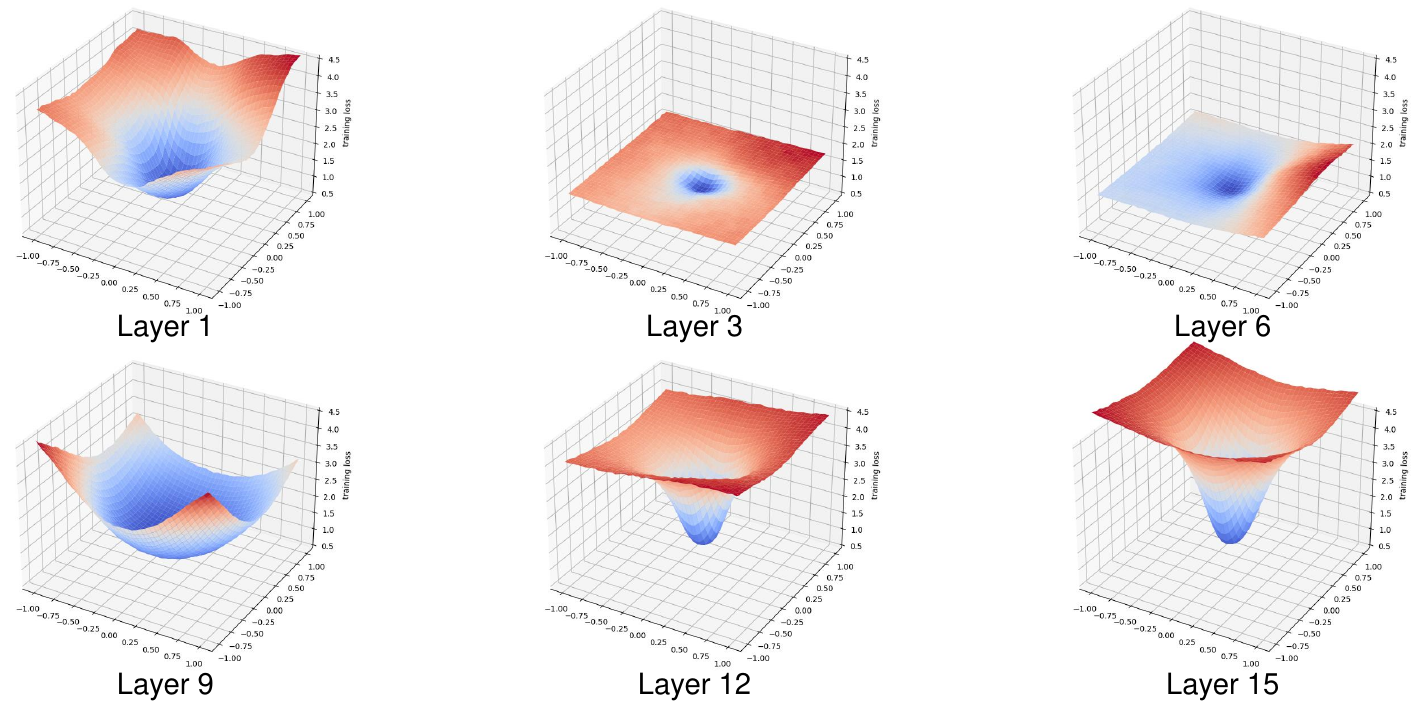}
\caption{
    The loss landscape of Wide-ResNet16 trained on Oxford\_Flowers102. Each layer has noticeably different loss landscapes.
}
\label{fig:landscape}
\end{figure*}

\subsection {Adaptive Layer Selection}
\textbf{Layer-Wise Data Characteristics Analysis} --
Our empirical study finds that the gradient norm delivers useful information regarding how individual layers contribute to the model's generalization performance.
Specifically, the layers have a consistent magnitude of the gradients throughout the training and the magnitude is quite stable while the learning rate remains the same.
Figure \ref{fig:layer_norm} shows the layer-wise gradient norm curves of CIFAR-10 \cite{krizhevsky2009learning} (ResNet-20 \cite{he2016deep}) and Oxford\_flowers102 \cite{Nilsback08} (WRN-16 \cite{zagoruyko2016wide}).
We see that some output-side layers have a noticeably larger gradient norm than others.
Assuming a strong correlation between the gradient norm and the model's generalization performance \cite{zhao2022penalizing,zhang2023gradient}, we can consider that those layers strongly affect the model's generalization performance.
Even if the gradient penalizing method is applied to some layers, if their gradient norms are small, they will have a non-negligible extra computational cost while not making a meaningful impact on the generalization performance.

Figure \ref{fig:landscape} shows the layer-wise loss landscapes measured from Wide-ResNet16 training on Oxford\_Flowers102.
We used the loss visualization method proposed in \cite{li2018visualizing}.
For each layer, 1) we prepare two orthogonal random vectors, 2) adjust each layer using them, and then 3) collect the training loss at $40 \times 40$ different grid points in the parameter space.
See the further details about the visualization settings.
Interestingly, we see that the layers show extremely different loss landscapes.
Such a tendency is well aligned with what we observed on the layer-wise gradient norm curves shown in Figure \ref{fig:layer_norm}.
These visualized loss landscapes provide insights into which layers should be perturbed to maximize the efficacy of the layer-wise gradient norm penalization.
\\
\textbf{Adaptive Layer Selection Algorithm} --
The above analysis leads us to design a layer-wise adaptive gradient penalizing method as follows.
After every model update, we first collect the gradient norm at all individual layers.
Then, choose $k$ layers, $w'$ in (\ref{eq:subset}), that have the largest gradient norm among $L$ layers.
In the following iteration, perturb only the layers in $w'$ instead of the total model.
Algorithm \ref{alg:LASAM} shows the described steps.
The algorithm selectively perturbs the layers using gradient ascent and then calculates the gradient descent at lines $5 \sim 7$.
The layer-wise gradient norm values are collected at line 9.
Note that the extra computation of collecting the gradient norm is almost negligible as compared to the forward pass or the backward pass cost.

\begin{algorithm}[t]
\begin{algorithmic}[1]
\caption{
    Layer-wise Adaptive Gradient Norm Penalizing Method (Layer-wise SAM)
}
\label{alg:LASAM}
\STATE \textbf{Input}: $k$: the number of layers to perturb.
\STATE {Initialize the layer-wise gradient norms to 0.}
    \FOR{$t = 0$ to $T-1$}
        \STATE {$w' \leftarrow k$ layers with the top gradient norms.}
        \STATE {Compute $\nabla L_S(w_i), \hspace{0.2cm} \forall i \in w'$}
        \STATE {Compute $\nabla L_S(w_i + r\nabla L_S(w_i)), \hspace{0.2cm} \forall i \in w$}
        \STATE {Compute the update: $\nabla L(w_i)$ using Eq. \ref{eq:grads3}} 
        \STATE {Update the model $w$ using $\nabla L(w_i)$}
        \STATE {Collect $\| \nabla L_S(w_i + r\nabla L_S(w_i)) \|_2, \hspace{0.2cm} \forall i \in w$}
    \ENDFOR
    \STATE{\textbf{Output:} $w$}
\end{algorithmic}
\end{algorithm}

We consider $k$ to be a tunable hyper-parameter.
If $k = L$, it becomes the conventional SAM.
On the other hand, if $k = 0$, it becomes the vanilla SGD.
As $k$ increases, the model perturbation cost increases while the gradient norm is more strongly penalized.
However, as shown in Figure \ref{fig:layer_norm}, some layers tend to have an almost negligible gradient magnitude.
Thus, users have to find the minimum value of $k$ that sufficiently suppresses the gradient norm having a minimal perturbation cost.
We found that the $k$ was tuned to a small value between $2 \sim 8$ across many different datasets, not only the computer vision benchmarks but also language modeling benchmarks.
Thus, one can begin the tuning with $k=4$ and easily find the best setting by a few grid searches.

As compared to the SOTA parameter-wise perturbation methods \cite{du2021efficient,mi2022make}, our layer-wise method has three critical benefits.
First, the layer-wise method clearly reduces the gradient computation cost without any help of sparse matrix multiplication features.
We believe that such a low implementation complexity will encourage SAM techniques to be more easily adopted to real-world applications.
Second, the error propagation cost can be significantly reduced.
The parameter-wise method still should propagate the errors through all the layers, limiting the performance gain.
Finally, our gradient norm-based layer selection enables users to make a practical trade-off between the model perturbation cost and the generalization performance.
By perturbing the layers with a high gradient norm, we expect the gradient norm of the whole model to be effectively suppressed while considerably reducing the total model perturbation cost.

\begin{table*}[ht!]
\centering
\small
\begin{tabular}{llccccccc} \toprule
Dataset & Model & Batch size & L.R. & Epochs & Vaniila SGD & Generalized SAM & LookSAM & \textbf{Layer-wise SAM} \\ \midrule 
CIFAR-10 & ResNet20 & 128 & 0.1 & 150 & $91.96 \pm 0.3\%$ & $92.64 \pm 0.4\%$ & $92.14 \pm 0.4\%$ (3) & \textbf{92.81}$\pm 0.3\%$ (4) \\
CIFAR-100 & Wide-ResNet28 & 128 & 0.1 & 200 & $79.41 \pm 0.4\%$ & $80.83 \pm 0.5\%$ & $80.13 \pm 0.3\%$ (2) & \textbf{81.25}$\pm 0.4\%$ (4) \\
Oxford\_Flowers102 & Wide-ResNet16 & 40 & 0.1 & 100 & $69.82 \pm 0.4\%$ & $74.77 \pm 0.5\%$ & $73.53 \pm 0.6\%$ (3) & \textbf{75.56}$\pm 0.6\%$ (2) \\ 
CIFAR-100 (Fine-Tuning) & ViT-b16 & 128 & 0.001 & 10 & $90.73 \pm 0.5\%$ & $91.46 \pm 0.9\%$ & $91.15 \pm 0.7\%$ (2) & \textbf{91.36}$\pm 0.5\%$ (1) \\ \bottomrule
\end{tabular}
\caption{
    The image classification performance comparison.
}
\label{tab:compare}
\end{table*}

\begin{figure*}[t]
\centering
\includegraphics[width=1.9\columnwidth]{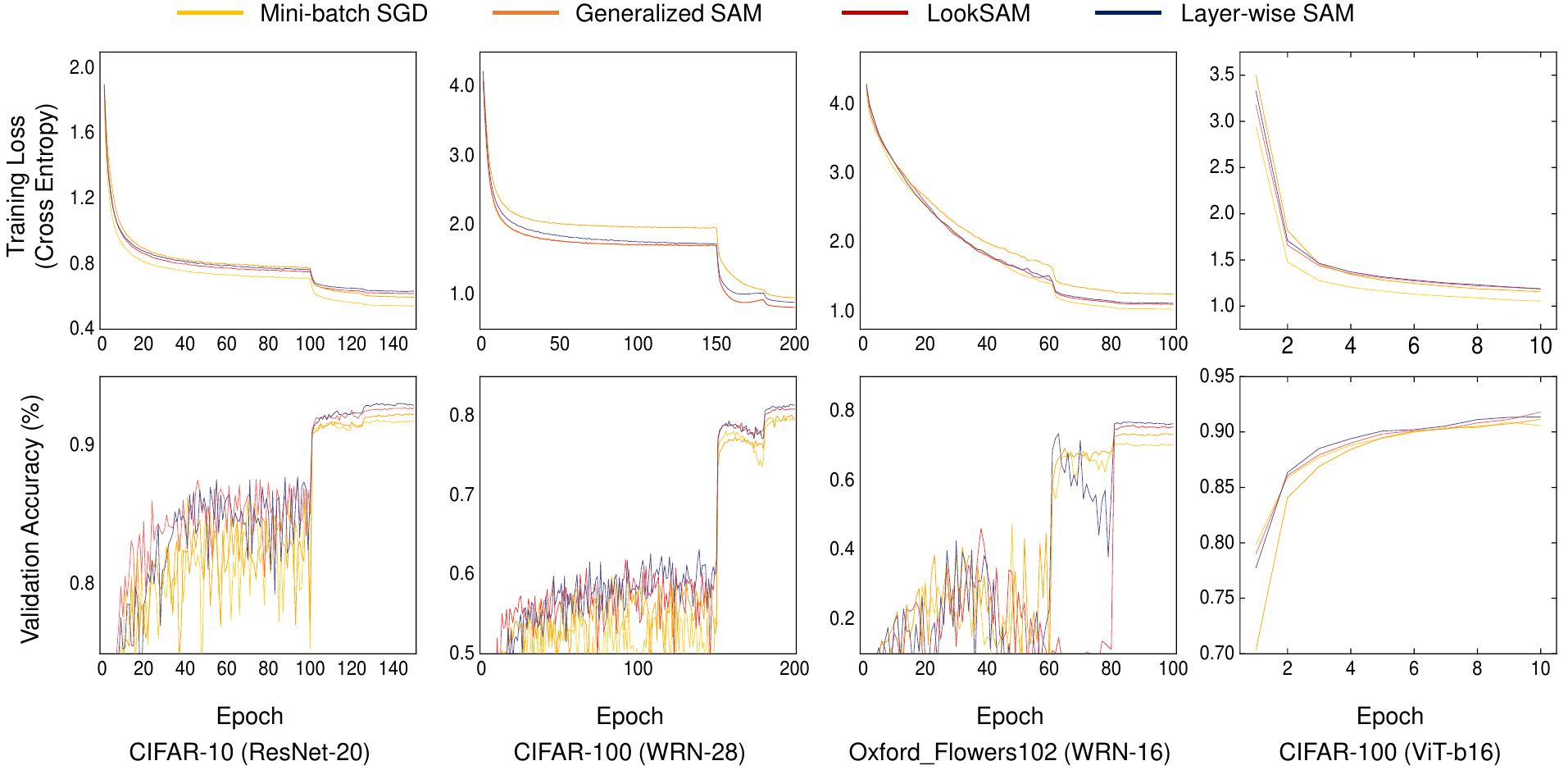}
\caption{
    The learning curves of CIFAR-10, CIFAR-100, and Oxford\_Flowers102. The hyper-parameters are shown in Table \ref{tab:compare}.
}
\label{fig:learning}
\end{figure*}

\section {Performance Evaluation} \label{sec:eval}
\subsection {Experimental Settings}
\textbf{System Configuration} -- We evaluate the performance of the proposed method using TensorFlow 2.9.3.
The training software is written in Python and parallelized with MPI to use two NVIDIA RTX A6000 GPUs.
The training throughput (images / sec) takes into account both the computation time for local training and the MPI communication time for averaging the locally calculated gradients across the GPUs.
The reported accuracy results are averaged across at least three independent runs.
\\
\textbf{Dataset and Hyper-Parameters} -- We use three representative computer vision benchmarks, CIFAR-10 \cite{krizhevsky2009learning}, CIFAR-100, and Oxford\_Flowers102 \cite{Nilsback08}.
For the three datasets, we use ResNet-20 \cite{he2016deep}, Wide-ResNet28-10 (WRN28) \cite{zagoruyko2016wide}, and Wide-ResNet16-2 (WRN16), respectively.
In addition, we also run fine-tuning experiments using Vision Transformer (ViT)~\cite{dosovitskiy2020image} and CIFAR-100.
We attached two fully-connected layers at the end of the ViT-b16 pretrained on ImageNet and then fine-tune only the new layers.
The batch size is 128 for both CIFAR datasets and 40 for Oxford\_Flowers102.
The learning rate starts at 0.1 and is decayed by a factor of 10 twice.
The number of epochs is 150, 200, and 100, for the three benchmarks respectively.
Because the TensorFlow version of Oxford\_Flowers102 contains only $12\%$ of the total dataset as training data, we instead used the test data that takes up $\sim 70\%$ for training.
The validation accuracy is measured using the rest of the dataset.

\subsection {Comparative Study}
We compare the proposed method to the SOTA variants of SAM, generalized SAM \cite{zhao2022penalizing} and LookSAM \cite{liu2022towards}, as well as the conventional mini-batch SGD.
Table \ref{tab:compare} shows the performance comparison and the hyper-parameter settings and Figure \ref{fig:learning} shows the corresponding learning curves.
The generalized SAM has two hyper-parameters, $r$ and $\alpha$, as shown in (\ref{eq:grads2}).
We tuned them by a grid search with a unit of 0.1, and the best setting was $r=0.1$ and $\alpha=0.8$ for CIFAR datasets and $r=0.1$ and $\alpha=0.7$ for Oxford\_Flowers102.
The numbers in the bracket on LookSAM column indicate the gradient ascent re-calculation interval.
We set the re-calculation interval to the maximum value that gives higher accuracy than the mini-batch SGD, expecting the maximized throughput without much losing the accuracy.
The numbers in the bracket on Layer-wise SAM column indicate the number of layers to which the gradient norm penalizing method is applied.

Overall, the proposed method, \lq{}Layer-wise SAM\rq{} achieves validation accuracy comparable to or even slightly higher than the generalized SAM in all the four benchmarks.
This result empirically proves that a few critical layers strongly affect the model's generalization performance.
By applying the gradient norm penalizing method to only a few critical layers, our method efficiently suppresses the gradient norm, achieving superior validation accuracy.
Regardless of how many layers employ the gradient norm penalizing method, the training loss converges more slowly than the mini-batch SGD.
This result is aligned with our understanding of convergence properties as shown in Section \ref{sec:theory}.
We found that LookSAM rapidly lost accuracy as the gradient ascent interval increased.
It achieved higher accuracy than the mini-batch SGD only when the interval was smaller than or equal to $3$ in all three benchmarks.

\begin{figure*}[ht!]
\centering
\includegraphics[width=1.7\columnwidth]{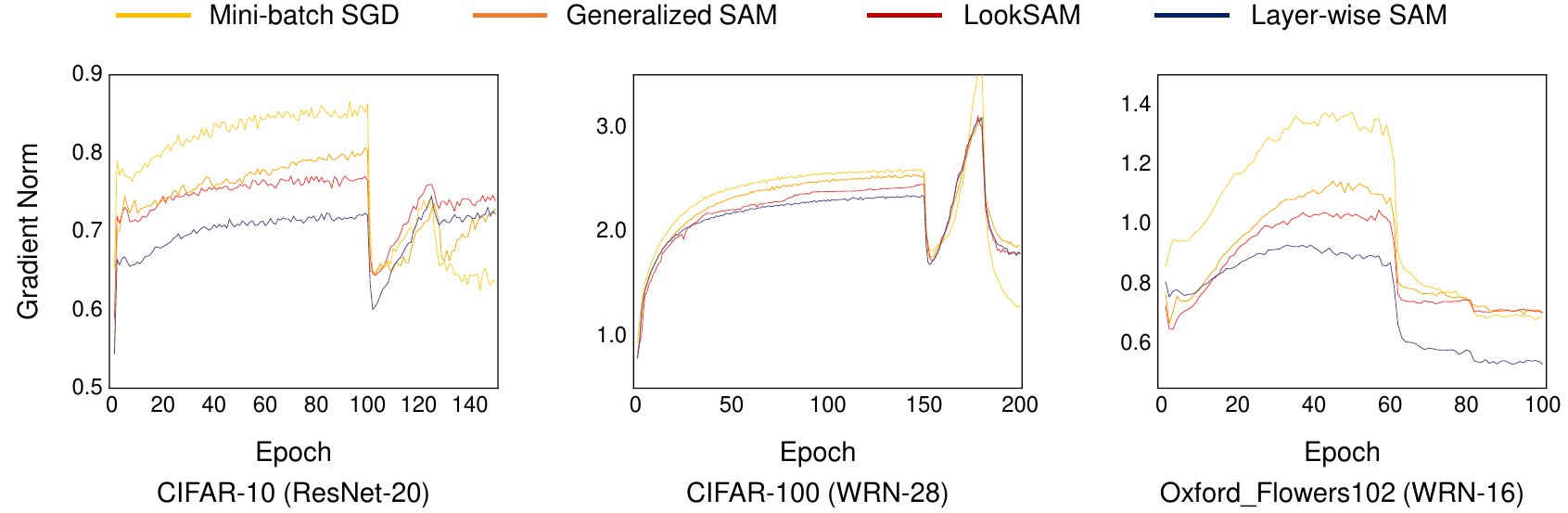}
\caption{
    The full model gradient norm curves of CIFAR-10 (ResNet20), CIFAR-100 (Wide-ResNet28), and Oxford\_Flowers102 (Wide-ResNet16).
    The norm is noticeably reduced when any SAM method is applied. The proposed layer-wise method effectively suppresses the gradient norm likely to the full model perturbation method.
}
\label{fig:norm}
\end{figure*}

\begin{figure*}[t]
\centering
\includegraphics[width=1.8\columnwidth]{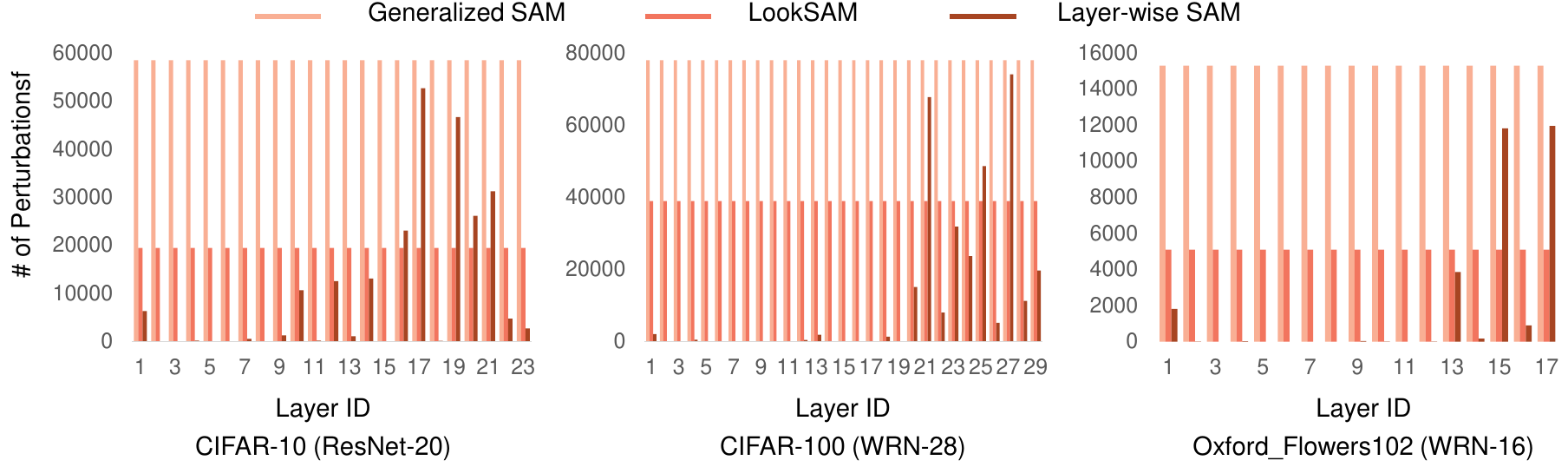}
\caption{
    The number of perturbations at all individual layers.
    The generalized SAM perturbs all the layers at every iteration. LookSAM's gradient ascent re-calculation interval is $3$, $2$, and $3$ for the three benchmarks, respectively. Our proposed method selectively perturbs the $k$ critical layers only, and the $k$ is set to $4$, $4$, and $2$ for the three benchmarks, respectively.
}
\label{fig:comp}
\end{figure*}

To further support our proposed method, we present the full model gradient norm comparison in Figure \ref{fig:norm}.
The proposed method effectively suppresses the gradient norm even though the model is perturbed at only a few layers.
We omit the ViT fine-tuning experiment here because the majority of the model parameters are frozen during the training.
One interesting observation is that the proposed method shows the gradient norm even lower than that of the generalized SAM that perturbs all the layers.
The same results are observed in all the three benchmarks.
This result implies that the model's generalization performance can be harmed if the layers with a minor gradient norm are perturbed.
A similar observation was reported in \cite{mi2022make}.

Another intriguing observation is that, while the gradient norm is well suppressed by applying the penalizing method to either the full model or a few critical layers, the norm becomes higher than that of the mini-batch SGD after decaying the learning rate.
We find that such a fast drop in the gradient norm is not aligned with how the generalization performance evolves.
Instead, the gradient norm collected before the learning rate decay effectively represents the generalization performance.
Understanding this phenomenon will be an interesting future work.

It is worth noting that our method consistently achieves slightly higher accuracy as compared to the generalized SAM in all the benchmarks.
Similar results have been already reported in \cite{du2021efficient}.
This common tendency implies that perturbing certain parameters may rather hurt the generalization performance.
Our experimental results show that the layers with a small gradient norm can be considered as such sensitive layers that may hurt the accuracy when perturbed.
Theoretically explaining this observation is another critical future work.

\begin{table*}[t]
\centering
\small
\begin{tabular}{llccccc} \toprule
Dataset & Model & Mini-batch SGD & Generalized SAM & LookSAM & Layer-wise SAM \\ \midrule
CIFAR-10 & ResNet20 & 659.2 img/sec & 459.5 img/sec & 595.2 img/sec & \textbf{604.2} img/sec \\
CIFAR-100 & Wide-ResNet28 & 229.1 img/sec & 139.5 img/sec & 151.0 img/sec & \textbf{190.7} img/sec \\
Oxford\_Flowers102 & Wide-ResNet16 & 604.2 img/sec & 401.9 img/sec & \textbf{473.6} img/sec & 463.4 img/sec \\ 
CIFAR-100 (Fine-Tuning) & ViT-b16 & 77.28 img/sec & 49.82 img/sec & 50.32 img/sec & \textbf{53.97} img/sec \\
\bottomrule
\end{tabular}
\caption{
    The training throughput on two NVIDIA RTX A6000 GPUs. One process is pinned on a single GPU and they synchronize the locally calculated gradients using MPI \textit{allreduce} operations. Since the communication is performed on a single compute node, the communication time is almost negligible.
}
\label{tab:throughput}
\end{table*}

\subsection {Computational Cost Analysis}
\textbf{Model Perturbation Cost} --
Let us take a closer look at how the proposed method reduces the computational cost of the model perturbation.
Figure \ref{fig:comp} shows the number of model perturbations at all individual layers.
Note that the proposed method, \lq{}Layer-wise SAM\rq{} achieves accuracy similar to or even slightly higher than that of the \lq{}SAM\rq{} the full model perturbation method.
The \lq{}Layer-wise SAM\rq{} has remarkably fewer model perturbations at most of the layers compared to the other methods.
We also see that the output-side layers are more likely chosen than the input-side layers regardless of the dataset.
This tendency means that the output-side layers likely have a larger gradient norm than the input-side layers.
In the backward pass, the errors are back-propagated and then the gradients can be computed using the obtained errors and the activations received from the previous layer.
Thus, our method can reduce the error propagation cost together with the gradient computation cost, and it results in significantly reducing the total model perturbation cost.
\\
\textbf{Theoretical Analysis and Comparisons} --
The computational cost of the proposed layer-wise perturbation method is as follows.
\begin{align}
    \mathcal{B}_{\text{min}(w')} + \sum_{i\in w'} | \nabla L(w_i) | + \mathcal{C}, \label{eq:cost1}
\end{align}
where $\mathcal{B}_{i}$ is the error propagation cost from the output layer to the layer $i$.
The $|\cdot|$ indicates the number of elements and the $\mathcal{C}$ indicates a constant computational cost that a single backward pass deterministically has.
The left term in (\ref{eq:cost1}) indicates the error backpropagation cost to the most input-side layer among $w'$.
So, the $\mathcal{B}_{\text{min}(w')}$ heavily depends on which layers are selected to be perturbed.
The right term is the gradient computation cost at all the $k$ layers in $w'$.

\begin{figure*}[t]
\centering
\includegraphics[width=1.3\columnwidth]{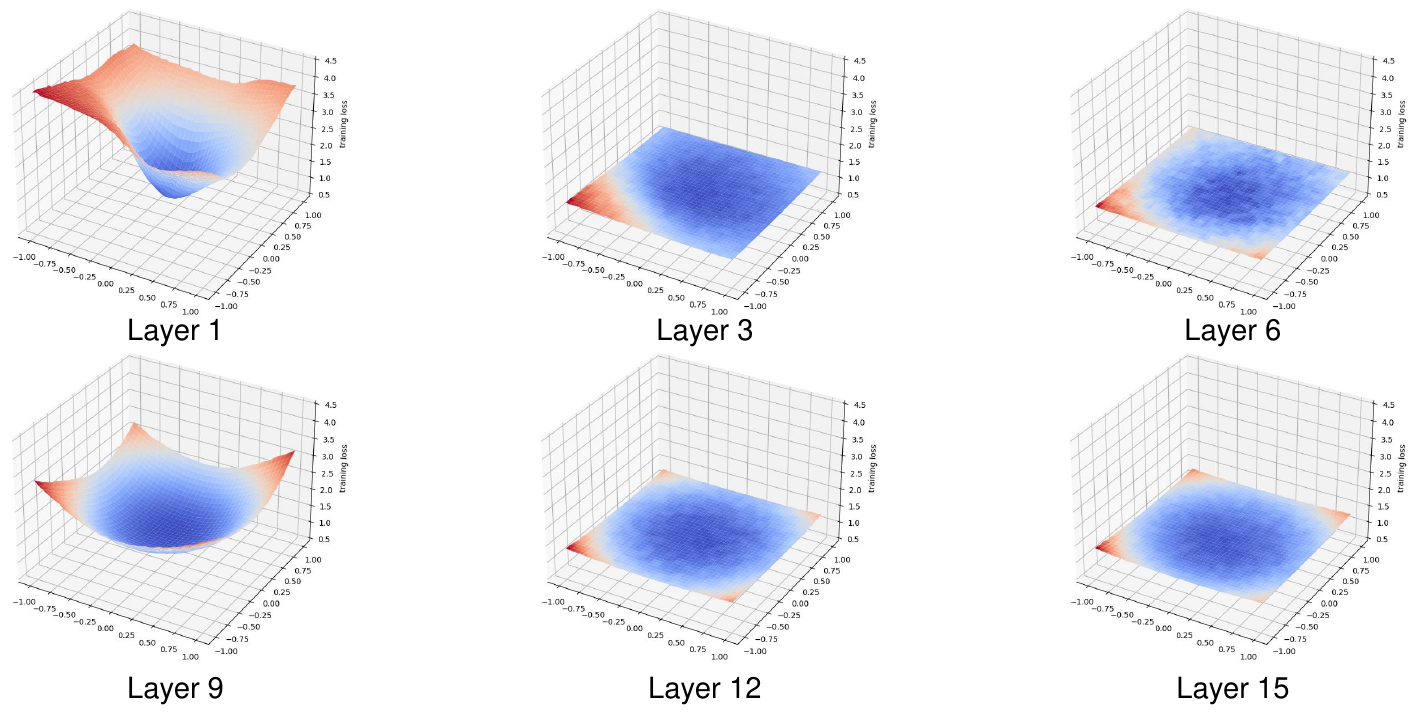}
\caption{
    The loss landscape of Wide-ResNet16 trained on Oxford\_Flowers102. The model was trained using the proposed layer-wise gradient norm penalizing method. The landscapes are much flatter as compared to Figure \ref{fig:landscape}.
}
\label{fig:landscape2}
\end{figure*}

To the best of our knowledge, \cite{liu2022towards} shows the best computational efficiency among several SAM variants.
It has a hyper-parameter $k$ which determines how many times the gradients will be re-used to perturb the model parameters.
The authors suggest $k=5$ which generally works well across several benchmarks.
Their experimental results show that the accuracy drops significantly when $k > 5$ in all the benchmarks.
Their computational cost can be analyzed as follows.
\begin{align}
\frac{1}{k} \left( \mathcal{B}_{1} + \sum_{i \in w} | \nabla L(w_i) | \right) + \mathcal{C}. \label{eq:cost2}
\end{align}
The $\mathcal{B}_1$ indicates the error propagation cost from the output layer to the input layer.
So, the total perturbation cost is proportionally reduced as $k$ increases.
The costs of these two different approaches, (\ref{eq:cost1}) and (\ref{eq:cost2}), cannot be easily compared because the value of (\ref{eq:cost1}) is dependent on the model architecture and which layers are included in $w'$.
Furthermore, the actual throughput shown in \cite{liu2022towards} is also not proportional to $k$ because many constant factors strongly affect the training time.
For example, ViT \cite{dosovitskiy2020image} throughput of ImageNet-1K \cite{deng2009imagenet} training is improved roughly from $13,000$ to $19,000$ as the $k$ goes from $1$ to $5$ ($31.6\%$ improved).
Therefore, we focus on the actual throughput rather than the theoretical computational complexity.

ESAM proposed in \cite{du2021efficient} perturbs all individual parameters independently with a probability of $\beta$.
However, due to the computations that have a constant complexity, its performance gain is not supposed to be similar to the ideal speedup.
For example, when $\beta = 0.5$, their reported ViT-S/16 (ImageNet) training throughput of ESAM is 734 images/sec while that of the conventional SAM is 581 images/sec ($26.3\%$ improved).
In addition, this approach assumes that the underlying deep learning framework supports the sparse gradient computation feature.
Thus, we do not directly compare the performance of our method to ESAM.
\\
\textbf{Training Throughput} --
Table \ref{tab:throughput} shows the training throughput measured on two A6000 GPUs.
The corresponding accuracy is shown in Table \ref{tab:compare}.
The $k$ of the proposed method is set to $4$, $4$, $2$, and $1$ for the four benchmarks, respectively.
The gradient ascent interval of LookSAM is set to $3$, $4$, $3$, and $2$ for the same four benchmarks, respectively.
Overall, our method achieves similar accuracy to the full model perturbation method while significantly improving the training throughput.
LookSAM achieves a slightly higher throughput of Oxford\_Flowers102 but its accuracy is much lower than the proposed method as shown in Table \ref{tab:compare}.
For the other three benchmarks, our method achieves the best throughput together with the highest accuracy.

Interestingly, the proposed method shows a quite different performance improvement depending on the model architecture.
The throughput drop for for ResNet20, WRN28, and WRN16, are $8.5\%$, $16.8\%$, and $25.4\%$, respectively.
The networks have their special architectures and the training datasets also have different patterns.
If a certain combination of dataset and model causes a large gradient norm at the input-side layers, it will likely have an expensive model perturbation cost due to the long backward passes.
On the other hand, if the combination yields a large gradient norm at the output-side layers, the backward pass for the layer-wise model perturbation becomes short having a cheaper computational cost.
In our experiments, all the benchmarks showed a different distribution of gradient norms across the layers, and it resulted in having such a different model perturbation cost.

\subsection {Ablation Study}
\textbf{Loss Landscape} --
Figure \ref{fig:landscape2} shows the layer-wise loss landscape when the model is trained using the proposed layer-wise gradient norm penalizing method.
The training loss values were collected from the same grid points as Figure \ref{fig:landscape}.
The plots show that the proposed method leads the model, especially the layers that fell into a sharp valley when using mini-batch SGD, to a flat region in the parameter space.
This result is also well aligned with what we observed in Figure \ref{fig:comp} such that the corresponding output-side layers are mostly chosen to be perturbed and their loss landscapes become much flatter as compared to Figure \ref{fig:landscape}.
\\
\textbf{Layer Selection Criterion} --
To verify the efficacy of the proposed adaptive layer selection method, we compare the performance of three different layer-wise perturbation approaches: 1) perturb the layers with a high gradient norm, 2) perturb the layers with a low gradient norm, and 3) random selection.
Table \ref{tab:ablation} shows the accuracy comparisons.
Note that \textit{None} is the mini-batch SGD.
The \textit{Random} can be considered to be a coarse-grained version of the randomized parameter-wise perturbation in \cite{du2021efficient}.
The $k$ is set to the smallest value that gives a similar accuracy to the conventional SAM.

In all three benchmarks, the \textit{Low-Norm} achieves significantly lower accuracy than the other perturbation criteria.
This result empirically proves that putting more weight on the layers with a high gradient norm is a valid approach to improve the generalization performance.
We also see that \textit{Random} shows a non-negligible accuracy improvement as compared to the mini-batch SGD.
By randomly choosing the layers to perturb, some critical layers could be included by chance, and it results in improving the generalization performance.
Since the \textit{High-Norm} consistently achieves higher accuracy than \textit{Random} in all the benchmarks, we can conclude that the proposed method effectively finds the best trade-off between the model perturbation cost and the generalization performance.
\\
\textbf{Impact of the Number of Layers to Perturb, $k$} --
We adjust the $k$ setting and analyze its impact on the generalization performance.
As the gradient norm penalizing method is applied to more layers, the norm is expected to be suppressed more strongly.
Table \ref{tab:k} shows the accuracy comparisons.
Interestingly, the accuracy becomes similar to the conventional SAM when $k$ is even set to a very small value such as 1 or 2.
Once it achieves the best accuracy at a certain $k$, the accuracy does not further go up as $k$ increases.
However, the throughput monotonically decreases as $k$ increases.
Therefore, users should find the value of $k$ that gives the best accuracy at the minimal model perturbation cost
In practice, we suggest starting the tuning from $k=2$ and then conducting a simple grid search to find a better setting.

\begin{table}[t]
\centering
\small
\begin{tabular}{lccc} \toprule
Dataset & $k$ & Perturbation Criterion & Validation Acc. (\%) \\ \midrule
\multirow{4}{*}{CIFAR-10} & \multirow{4}{*}{4} & None & $91.63 \pm 0.3\%$ \\
& & High-Norm & $\textbf{92.81} \pm 0.3\%$  \\
& & Low-Norm & $91.40 \pm 0.5\%$ \\
& & Random & $92.46 \pm 0.2\%$ \\ \midrule
\multirow{4}{*}{CIFAR-100} & \multirow{4}{*}{4} & None & $79.41 \pm 0.4\%$ \\
& & High-Norm & $\textbf{81.25} \pm 0.4\%$ \\
& & Low-Norm & $80.01 \pm 0.9\%$ \\
& & Random & $80.67 \pm 0.5\%$ \\ \midrule
\multirow{4}{*}{Oxford\_Flowers102} & \multirow{4}{*}{2} & None & $69.82 \pm 0.4\%$ \\
& & High-Norm & $\textbf{75.36} \pm 0.6\%$ \\
& & Low-Norm & $71.76 \pm 1.1\%$ \\
& & Random & $72.45 \pm 0.9\%$ \\ \bottomrule
\end{tabular}
\caption{
    The validation accuracy comparisons among a) \textit{None}, b) \textit{High-Norm}, c) \textit{Low-Norm}, and d) \textit{Random} layer-wise perturbation methods. The best accuracy is achieved when the layers with the highest gradient norm are perturbed in all three benchmarks.
}
\label{tab:ablation}
\end{table}

\begin{table}[t]
\centering
\small
\begin{tabular}{lccc} \toprule
Dataset & $k$ & Validation Acc. (\%) & Throughput (img/sec) \\ \midrule
\multirow{5}{*}{CIFAR-10} & 1 & $92.51 \pm 0.5\%$ & 628.5 \\
& 2 & $92.57 \pm 0.4\%$ & 617.0 \\
& 4 & $92.81 \pm 0.3\%$ & 604.2 \\
& 8 & $92.77 \pm 0.5\%$ & 564.5 \\
& all & $92.64 \pm 0.4\%$ & 459.5 \\ \midrule
\multirow{5}{*}{CIFAR-100} & 1 & $80.12 \pm 0.3\%$ & 229.1 \\
& 2 & $80.81 \pm 0.7\%$ & 194.6 \\
& 4 & $81.25 \pm 0.4\%$ & 190.7 \\
& 8 & $81.93 \pm 0.5\%$ & 160.0 \\
& all & $80.83 \pm 0.5\%$ & 139.5 \\ \midrule
\multirow{5}{*}{Oxford\_Flowers102} & 1 & $74.93\pm 0.5\%$ & 482.6 \\
& 2 & $75.56 \pm 0.6\%$ & 463.4 \\
& 4 & $76.31 \pm 1.1\%$ & 442.9 \\
& 8 & $76.70 \pm 0.9\%$ & 427.5 \\ 
& all & $74.77 \pm 0.5\%$ & 401.9 \\ \bottomrule
\end{tabular}
\caption{
    The validation accuracy and throughput with different $k$ settings.
    In all the benchmarks, the best accuracy is achieved with a certain $k$ and then it does not further go up as $k$ increases.
}
\label{tab:k}
\end{table}
\section {Conclusion} \label{sec:conclusion}

In this paper, we discussed how to tackle the expensive model perturbation cost issue in SAM.
By selectively perturbing the critical layers only, our method suppresses the gradient norm of the whole model while effectively improving the models' generalization performance.
This method is not dependent on model architecture, data characteristics, or optimization algorithms.
Thus, it can be employed by real-world deep learning applications without having any conflicts to their existing features.
Our empirical study also provides intriguing insights into how each layer contributes to the whole network's generalization performance.
We believe that our study will encourage many Deep Learning applications to take advantage of the SAM to enjoy improved generalization performance.

\section*{Acknowledgments}
This work was partly supported by Institute of Information \& communications Technology Planning \& Evaluation (IITP) grant funded by the Korea government(MSIT) (No.RS-2022-00155915, Artificial Intelligence Convergence Innovation Human Resources Development (Inha University)).
This research was supported by the MSIT(Ministry of Science, ICT), Korea, under the National Program for Excellence in SW), supervised by the IITP(Institute of Information \& communications Technology Planing \& Evaluation) in 2024 (2022-0-01127).
This work was supported by the National Research Foundation of Korea(NRF) grant funded by the Korea government(MSIT) (No. RS-2023-00279003).
This work was supported by INHA UNIVERSITY Research Grant.

%%
%% The next two lines define the bibliography style to be used, and
%% the bibliography file.
\bibliographystyle{ACM-Reference-Format}
\bibliography{mybib}

%%% -*-BibTeX-*-
%%% Do NOT edit. File created by BibTeX with style
%%% ACM-Reference-Format-Journals [18-Jan-2012].

\begin{thebibliography}{36}

%%% ====================================================================
%%% NOTE TO THE USER: you can override these defaults by providing
%%% customized versions of any of these macros before the \bibliography
%%% command.  Each of them MUST provide its own final punctuation,
%%% except for \shownote{}, \showDOI{}, and \showURL{}.  The latter two
%%% do not use final punctuation, in order to avoid confusing it with
%%% the Web address.
%%%
%%% To suppress output of a particular field, define its macro to expand
%%% to an empty string, or better, \unskip, like this:
%%%
%%% \newcommand{\showDOI}[1]{\unskip}   % LaTeX syntax
%%%
%%% \def \showDOI #1{\unskip}           % plain TeX syntax
%%%
%%% ====================================================================

\ifx \showCODEN    \undefined \def \showCODEN     #1{\unskip}     \fi
\ifx \showDOI      \undefined \def \showDOI       #1{#1}\fi
\ifx \showISBNx    \undefined \def \showISBNx     #1{\unskip}     \fi
\ifx \showISBNxiii \undefined \def \showISBNxiii  #1{\unskip}     \fi
\ifx \showISSN     \undefined \def \showISSN      #1{\unskip}     \fi
\ifx \showLCCN     \undefined \def \showLCCN      #1{\unskip}     \fi
\ifx \shownote     \undefined \def \shownote      #1{#1}          \fi
\ifx \showarticletitle \undefined \def \showarticletitle #1{#1}   \fi
\ifx \showURL      \undefined \def \showURL       {\relax}        \fi
% The following commands are used for tagged output and should be
% invisible to TeX
\providecommand\bibfield[2]{#2}
\providecommand\bibinfo[2]{#2}
\providecommand\natexlab[1]{#1}
\providecommand\showeprint[2][]{arXiv:#2}

\bibitem[Andriushchenko and Flammarion(2022)]%
        {andriushchenko2022towards}
\bibfield{author}{\bibinfo{person}{Maksym Andriushchenko} {and} \bibinfo{person}{Nicolas Flammarion}.} \bibinfo{year}{2022}\natexlab{}.
\newblock \showarticletitle{Towards understanding sharpness-aware minimization}. In \bibinfo{booktitle}{\emph{International Conference on Machine Learning}}. PMLR, \bibinfo{pages}{639--668}.
\newblock


\bibitem[Baldassi et~al\mbox{.}(2020)]%
        {baldassi2020shaping}
\bibfield{author}{\bibinfo{person}{Carlo Baldassi}, \bibinfo{person}{Fabrizio Pittorino}, {and} \bibinfo{person}{Riccardo Zecchina}.} \bibinfo{year}{2020}\natexlab{}.
\newblock \showarticletitle{Shaping the learning landscape in neural networks around wide flat minima}.
\newblock \bibinfo{journal}{\emph{Proceedings of the National Academy of Sciences}} \bibinfo{volume}{117}, \bibinfo{number}{1} (\bibinfo{year}{2020}), \bibinfo{pages}{161--170}.
\newblock


\bibitem[Bartlett et~al\mbox{.}(2023)]%
        {bartlett2023dynamics}
\bibfield{author}{\bibinfo{person}{Peter~L Bartlett}, \bibinfo{person}{Philip~M Long}, {and} \bibinfo{person}{Olivier Bousquet}.} \bibinfo{year}{2023}\natexlab{}.
\newblock \showarticletitle{The dynamics of sharpness-aware minimization: Bouncing across ravines and drifting towards wide minima}.
\newblock \bibinfo{journal}{\emph{Journal of Machine Learning Research}} \bibinfo{volume}{24}, \bibinfo{number}{316} (\bibinfo{year}{2023}), \bibinfo{pages}{1--36}.
\newblock


\bibitem[Caldarola et~al\mbox{.}(2022)]%
        {caldarola2022improving}
\bibfield{author}{\bibinfo{person}{Debora Caldarola}, \bibinfo{person}{Barbara Caputo}, {and} \bibinfo{person}{Marco Ciccone}.} \bibinfo{year}{2022}\natexlab{}.
\newblock \showarticletitle{Improving generalization in federated learning by seeking flat minima}. In \bibinfo{booktitle}{\emph{European Conference on Computer Vision}}. Springer, \bibinfo{pages}{654--672}.
\newblock


\bibitem[Chaudhari et~al\mbox{.}(2019)]%
        {chaudhari2019entropy}
\bibfield{author}{\bibinfo{person}{Pratik Chaudhari}, \bibinfo{person}{Anna Choromanska}, \bibinfo{person}{Stefano Soatto}, \bibinfo{person}{Yann LeCun}, \bibinfo{person}{Carlo Baldassi}, \bibinfo{person}{Christian Borgs}, \bibinfo{person}{Jennifer Chayes}, \bibinfo{person}{Levent Sagun}, {and} \bibinfo{person}{Riccardo Zecchina}.} \bibinfo{year}{2019}\natexlab{}.
\newblock \showarticletitle{Entropy-sgd: Biasing gradient descent into wide valleys}.
\newblock \bibinfo{journal}{\emph{Journal of Statistical Mechanics: Theory and Experiment}} \bibinfo{volume}{2019}, \bibinfo{number}{12} (\bibinfo{year}{2019}), \bibinfo{pages}{124018}.
\newblock


\bibitem[Deng et~al\mbox{.}(2009)]%
        {deng2009imagenet}
\bibfield{author}{\bibinfo{person}{Jia Deng}, \bibinfo{person}{Wei Dong}, \bibinfo{person}{Richard Socher}, \bibinfo{person}{Li-Jia Li}, \bibinfo{person}{Kai Li}, {and} \bibinfo{person}{Li Fei-Fei}.} \bibinfo{year}{2009}\natexlab{}.
\newblock \showarticletitle{Imagenet: A large-scale hierarchical image database}. In \bibinfo{booktitle}{\emph{2009 IEEE conference on computer vision and pattern recognition}}. Ieee, \bibinfo{pages}{248--255}.
\newblock


\bibitem[Dosovitskiy et~al\mbox{.}(2020)]%
        {dosovitskiy2020image}
\bibfield{author}{\bibinfo{person}{Alexey Dosovitskiy}, \bibinfo{person}{Lucas Beyer}, \bibinfo{person}{Alexander Kolesnikov}, \bibinfo{person}{Dirk Weissenborn}, \bibinfo{person}{Xiaohua Zhai}, \bibinfo{person}{Thomas Unterthiner}, \bibinfo{person}{Mostafa Dehghani}, \bibinfo{person}{Matthias Minderer}, \bibinfo{person}{Georg Heigold}, \bibinfo{person}{Sylvain Gelly}, {et~al\mbox{.}}} \bibinfo{year}{2020}\natexlab{}.
\newblock \showarticletitle{An image is worth 16x16 words: Transformers for image recognition at scale}.
\newblock \bibinfo{journal}{\emph{arXiv preprint arXiv:2010.11929}} (\bibinfo{year}{2020}).
\newblock


\bibitem[Du et~al\mbox{.}(2021)]%
        {du2021efficient}
\bibfield{author}{\bibinfo{person}{Jiawei Du}, \bibinfo{person}{Hanshu Yan}, \bibinfo{person}{Jiashi Feng}, \bibinfo{person}{Joey~Tianyi Zhou}, \bibinfo{person}{Liangli Zhen}, \bibinfo{person}{Rick Siow~Mong Goh}, {and} \bibinfo{person}{Vincent~YF Tan}.} \bibinfo{year}{2021}\natexlab{}.
\newblock \showarticletitle{Efficient sharpness-aware minimization for improved training of neural networks}.
\newblock \bibinfo{journal}{\emph{arXiv preprint arXiv:2110.03141}} (\bibinfo{year}{2021}).
\newblock


\bibitem[Foret et~al\mbox{.}(2020)]%
        {foret2020sharpness}
\bibfield{author}{\bibinfo{person}{Pierre Foret}, \bibinfo{person}{Ariel Kleiner}, \bibinfo{person}{Hossein Mobahi}, {and} \bibinfo{person}{Behnam Neyshabur}.} \bibinfo{year}{2020}\natexlab{}.
\newblock \showarticletitle{Sharpness-aware minimization for efficiently improving generalization}.
\newblock \bibinfo{journal}{\emph{arXiv preprint arXiv:2010.01412}} (\bibinfo{year}{2020}).
\newblock


\bibitem[He et~al\mbox{.}(2016)]%
        {he2016deep}
\bibfield{author}{\bibinfo{person}{Kaiming He}, \bibinfo{person}{Xiangyu Zhang}, \bibinfo{person}{Shaoqing Ren}, {and} \bibinfo{person}{Jian Sun}.} \bibinfo{year}{2016}\natexlab{}.
\newblock \showarticletitle{Deep residual learning for image recognition}. In \bibinfo{booktitle}{\emph{Proceedings of the IEEE conference on computer vision and pattern recognition}}. \bibinfo{pages}{770--778}.
\newblock


\bibitem[Hochreiter and Schmidhuber(1997)]%
        {hochreiter1997flat}
\bibfield{author}{\bibinfo{person}{Sepp Hochreiter} {and} \bibinfo{person}{J{\"u}rgen Schmidhuber}.} \bibinfo{year}{1997}\natexlab{}.
\newblock \showarticletitle{Flat minima}.
\newblock \bibinfo{journal}{\emph{Neural computation}} \bibinfo{volume}{9}, \bibinfo{number}{1} (\bibinfo{year}{1997}), \bibinfo{pages}{1--42}.
\newblock


\bibitem[Jiang et~al\mbox{.}(2023)]%
        {jiang2023adaptive}
\bibfield{author}{\bibinfo{person}{Weisen Jiang}, \bibinfo{person}{Hansi Yang}, \bibinfo{person}{Yu Zhang}, {and} \bibinfo{person}{James Kwok}.} \bibinfo{year}{2023}\natexlab{}.
\newblock \showarticletitle{An Adaptive Policy to Employ Sharpness-Aware Minimization}.
\newblock \bibinfo{journal}{\emph{arXiv preprint arXiv:2304.14647}} (\bibinfo{year}{2023}).
\newblock


\bibitem[Keskar et~al\mbox{.}(2016)]%
        {keskar2016large}
\bibfield{author}{\bibinfo{person}{Nitish~Shirish Keskar}, \bibinfo{person}{Dheevatsa Mudigere}, \bibinfo{person}{Jorge Nocedal}, \bibinfo{person}{Mikhail Smelyanskiy}, {and} \bibinfo{person}{Ping Tak~Peter Tang}.} \bibinfo{year}{2016}\natexlab{}.
\newblock \showarticletitle{On large-batch training for deep learning: Generalization gap and sharp minima}.
\newblock \bibinfo{journal}{\emph{arXiv preprint arXiv:1609.04836}} (\bibinfo{year}{2016}).
\newblock


\bibitem[Krizhevsky et~al\mbox{.}(2009)]%
        {krizhevsky2009learning}
\bibfield{author}{\bibinfo{person}{Alex Krizhevsky}, \bibinfo{person}{Geoffrey Hinton}, {et~al\mbox{.}}} \bibinfo{year}{2009}\natexlab{}.
\newblock \showarticletitle{Learning multiple layers of features from tiny images}.
\newblock  (\bibinfo{year}{2009}).
\newblock


\bibitem[Kwon et~al\mbox{.}(2021)]%
        {kwon2021asam}
\bibfield{author}{\bibinfo{person}{Jungmin Kwon}, \bibinfo{person}{Jeongseop Kim}, \bibinfo{person}{Hyunseo Park}, {and} \bibinfo{person}{In~Kwon Choi}.} \bibinfo{year}{2021}\natexlab{}.
\newblock \showarticletitle{Asam: Adaptive sharpness-aware minimization for scale-invariant learning of deep neural networks}. In \bibinfo{booktitle}{\emph{International Conference on Machine Learning}}. PMLR, \bibinfo{pages}{5905--5914}.
\newblock


\bibitem[Lee et~al\mbox{.}(2023a)]%
        {lee2023achieving}
\bibfield{author}{\bibinfo{person}{Sunwoo Lee}, \bibinfo{person}{Chaoyang He}, {and} \bibinfo{person}{Salman Avestimehr}.} \bibinfo{year}{2023}\natexlab{a}.
\newblock \showarticletitle{Achieving small-batch accuracy with large-batch scalability via Hessian-aware learning rate adjustment}.
\newblock \bibinfo{journal}{\emph{Neural Networks}}  \bibinfo{volume}{158} (\bibinfo{year}{2023}), \bibinfo{pages}{1--14}.
\newblock


\bibitem[Lee et~al\mbox{.}(2023b)]%
        {lee2023partial}
\bibfield{author}{\bibinfo{person}{Sunwoo Lee}, \bibinfo{person}{Anit~Kumar Sahu}, \bibinfo{person}{Chaoyang He}, {and} \bibinfo{person}{Salman Avestimehr}.} \bibinfo{year}{2023}\natexlab{b}.
\newblock \showarticletitle{Partial model averaging in federated learning: Performance guarantees and benefits}.
\newblock \bibinfo{journal}{\emph{Neurocomputing}}  \bibinfo{volume}{556} (\bibinfo{year}{2023}), \bibinfo{pages}{126647}.
\newblock


\bibitem[Lee et~al\mbox{.}(2023c)]%
        {lee2023layer}
\bibfield{author}{\bibinfo{person}{Sunwoo Lee}, \bibinfo{person}{Tuo Zhang}, {and} \bibinfo{person}{A~Salman Avestimehr}.} \bibinfo{year}{2023}\natexlab{c}.
\newblock \showarticletitle{Layer-wise adaptive model aggregation for scalable federated learning}. In \bibinfo{booktitle}{\emph{Proceedings of the AAAI Conference on Artificial Intelligence}}, Vol.~\bibinfo{volume}{37}. \bibinfo{pages}{8491--8499}.
\newblock


\bibitem[Li et~al\mbox{.}(2018)]%
        {li2018visualizing}
\bibfield{author}{\bibinfo{person}{Hao Li}, \bibinfo{person}{Zheng Xu}, \bibinfo{person}{Gavin Taylor}, \bibinfo{person}{Christoph Studer}, {and} \bibinfo{person}{Tom Goldstein}.} \bibinfo{year}{2018}\natexlab{}.
\newblock \showarticletitle{Visualizing the loss landscape of neural nets}.
\newblock \bibinfo{journal}{\emph{Advances in neural information processing systems}}  \bibinfo{volume}{31} (\bibinfo{year}{2018}).
\newblock


\bibitem[Liu et~al\mbox{.}(2022)]%
        {liu2022towards}
\bibfield{author}{\bibinfo{person}{Yong Liu}, \bibinfo{person}{Siqi Mai}, \bibinfo{person}{Xiangning Chen}, \bibinfo{person}{Cho-Jui Hsieh}, {and} \bibinfo{person}{Yang You}.} \bibinfo{year}{2022}\natexlab{}.
\newblock \showarticletitle{Towards efficient and scalable sharpness-aware minimization}. In \bibinfo{booktitle}{\emph{Proceedings of the IEEE/CVF Conference on Computer Vision and Pattern Recognition}}. \bibinfo{pages}{12360--12370}.
\newblock


\bibitem[Ma et~al\mbox{.}(2022)]%
        {ma2022layer}
\bibfield{author}{\bibinfo{person}{Xiaosong Ma}, \bibinfo{person}{Jie Zhang}, \bibinfo{person}{Song Guo}, {and} \bibinfo{person}{Wenchao Xu}.} \bibinfo{year}{2022}\natexlab{}.
\newblock \showarticletitle{Layer-wised model aggregation for personalized federated learning}. In \bibinfo{booktitle}{\emph{Proceedings of the IEEE/CVF conference on computer vision and pattern recognition}}. \bibinfo{pages}{10092--10101}.
\newblock


\bibitem[Mi et~al\mbox{.}(2022)]%
        {mi2022make}
\bibfield{author}{\bibinfo{person}{Peng Mi}, \bibinfo{person}{Li Shen}, \bibinfo{person}{Tianhe Ren}, \bibinfo{person}{Yiyi Zhou}, \bibinfo{person}{Xiaoshuai Sun}, \bibinfo{person}{Rongrong Ji}, {and} \bibinfo{person}{Dacheng Tao}.} \bibinfo{year}{2022}\natexlab{}.
\newblock \showarticletitle{Make sharpness-aware minimization stronger: A sparsified perturbation approach}.
\newblock \bibinfo{journal}{\emph{Advances in Neural Information Processing Systems}}  \bibinfo{volume}{35} (\bibinfo{year}{2022}), \bibinfo{pages}{30950--30962}.
\newblock


\bibitem[Nilsback and Zisserman(2008)]%
        {Nilsback08}
\bibfield{author}{\bibinfo{person}{M-E. Nilsback} {and} \bibinfo{person}{A. Zisserman}.} \bibinfo{year}{2008}\natexlab{}.
\newblock \showarticletitle{Automated Flower Classification over a Large Number of Classes}. In \bibinfo{booktitle}{\emph{Proceedings of the Indian Conference on Computer Vision, Graphics and Image Processing}}.
\newblock


\bibitem[Petzka et~al\mbox{.}(2021)]%
        {petzka2021relative}
\bibfield{author}{\bibinfo{person}{Henning Petzka}, \bibinfo{person}{Michael Kamp}, \bibinfo{person}{Linara Adilova}, \bibinfo{person}{Cristian Sminchisescu}, {and} \bibinfo{person}{Mario Boley}.} \bibinfo{year}{2021}\natexlab{}.
\newblock \showarticletitle{Relative flatness and generalization}.
\newblock \bibinfo{journal}{\emph{Advances in neural information processing systems}}  \bibinfo{volume}{34} (\bibinfo{year}{2021}), \bibinfo{pages}{18420--18432}.
\newblock


\bibitem[Qu et~al\mbox{.}(2022)]%
        {qu2022generalized}
\bibfield{author}{\bibinfo{person}{Zhe Qu}, \bibinfo{person}{Xingyu Li}, \bibinfo{person}{Rui Duan}, \bibinfo{person}{Yao Liu}, \bibinfo{person}{Bo Tang}, {and} \bibinfo{person}{Zhuo Lu}.} \bibinfo{year}{2022}\natexlab{}.
\newblock \showarticletitle{Generalized federated learning via sharpness aware minimization}. In \bibinfo{booktitle}{\emph{International Conference on Machine Learning}}. PMLR, \bibinfo{pages}{18250--18280}.
\newblock


\bibitem[Schulz and Behnke(2012)]%
        {schulz2012deep}
\bibfield{author}{\bibinfo{person}{Hannes Schulz} {and} \bibinfo{person}{Sven Behnke}.} \bibinfo{year}{2012}\natexlab{}.
\newblock \showarticletitle{Deep learning: Layer-wise learning of feature hierarchies}.
\newblock \bibinfo{journal}{\emph{KI-K{\"u}nstliche Intelligenz}}  \bibinfo{volume}{26} (\bibinfo{year}{2012}), \bibinfo{pages}{357--363}.
\newblock


\bibitem[Sun et~al\mbox{.}(2023)]%
        {sun2023adasam}
\bibfield{author}{\bibinfo{person}{Hao Sun}, \bibinfo{person}{Li Shen}, \bibinfo{person}{Qihuang Zhong}, \bibinfo{person}{Liang Ding}, \bibinfo{person}{Shixiang Chen}, \bibinfo{person}{Jingwei Sun}, \bibinfo{person}{Jing Li}, \bibinfo{person}{Guangzhong Sun}, {and} \bibinfo{person}{Dacheng Tao}.} \bibinfo{year}{2023}\natexlab{}.
\newblock \showarticletitle{Adasam: Boosting sharpness-aware minimization with adaptive learning rate and momentum for training deep neural networks}.
\newblock \bibinfo{journal}{\emph{arXiv preprint arXiv:2303.00565}} (\bibinfo{year}{2023}).
\newblock


\bibitem[Wen et~al\mbox{.}(2018)]%
        {wen2018smoothout}
\bibfield{author}{\bibinfo{person}{Wei Wen}, \bibinfo{person}{Yandan Wang}, \bibinfo{person}{Feng Yan}, \bibinfo{person}{Cong Xu}, \bibinfo{person}{Chunpeng Wu}, \bibinfo{person}{Yiran Chen}, {and} \bibinfo{person}{Hai Li}.} \bibinfo{year}{2018}\natexlab{}.
\newblock \showarticletitle{Smoothout: Smoothing out sharp minima to improve generalization in deep learning}.
\newblock \bibinfo{journal}{\emph{arXiv preprint arXiv:1805.07898}} (\bibinfo{year}{2018}).
\newblock


\bibitem[Yao et~al\mbox{.}(2020)]%
        {yao2020pyhessian}
\bibfield{author}{\bibinfo{person}{Zhewei Yao}, \bibinfo{person}{Amir Gholami}, \bibinfo{person}{Kurt Keutzer}, {and} \bibinfo{person}{Michael~W Mahoney}.} \bibinfo{year}{2020}\natexlab{}.
\newblock \showarticletitle{Pyhessian: Neural networks through the lens of the hessian}. In \bibinfo{booktitle}{\emph{2020 IEEE international conference on big data (Big data)}}. IEEE, \bibinfo{pages}{581--590}.
\newblock


\bibitem[You et~al\mbox{.}(2018)]%
        {you2018imagenet}
\bibfield{author}{\bibinfo{person}{Yang You}, \bibinfo{person}{Zhao Zhang}, \bibinfo{person}{Cho-Jui Hsieh}, \bibinfo{person}{James Demmel}, {and} \bibinfo{person}{Kurt Keutzer}.} \bibinfo{year}{2018}\natexlab{}.
\newblock \showarticletitle{Imagenet training in minutes}. In \bibinfo{booktitle}{\emph{Proceedings of the 47th international conference on parallel processing}}. \bibinfo{pages}{1--10}.
\newblock


\bibitem[Zagoruyko and Komodakis(2016)]%
        {zagoruyko2016wide}
\bibfield{author}{\bibinfo{person}{Sergey Zagoruyko} {and} \bibinfo{person}{Nikos Komodakis}.} \bibinfo{year}{2016}\natexlab{}.
\newblock \showarticletitle{Wide residual networks}.
\newblock \bibinfo{journal}{\emph{arXiv preprint arXiv:1605.07146}} (\bibinfo{year}{2016}).
\newblock


\bibitem[Zhang et~al\mbox{.}(2023)]%
        {zhang2023gradient}
\bibfield{author}{\bibinfo{person}{Xingxuan Zhang}, \bibinfo{person}{Renzhe Xu}, \bibinfo{person}{Han Yu}, \bibinfo{person}{Hao Zou}, {and} \bibinfo{person}{Peng Cui}.} \bibinfo{year}{2023}\natexlab{}.
\newblock \showarticletitle{Gradient norm aware minimization seeks first-order flatness and improves generalization}. In \bibinfo{booktitle}{\emph{Proceedings of the IEEE/CVF Conference on Computer Vision and Pattern Recognition}}. \bibinfo{pages}{20247--20257}.
\newblock


\bibitem[Zhao et~al\mbox{.}(2022)]%
        {zhao2022penalizing}
\bibfield{author}{\bibinfo{person}{Yang Zhao}, \bibinfo{person}{Hao Zhang}, {and} \bibinfo{person}{Xiuyuan Hu}.} \bibinfo{year}{2022}\natexlab{}.
\newblock \showarticletitle{Penalizing gradient norm for efficiently improving generalization in deep learning}. In \bibinfo{booktitle}{\emph{International Conference on Machine Learning}}. PMLR, \bibinfo{pages}{26982--26992}.
\newblock


\bibitem[Zheng et~al\mbox{.}(2021)]%
        {zheng2021regularizing}
\bibfield{author}{\bibinfo{person}{Yaowei Zheng}, \bibinfo{person}{Richong Zhang}, {and} \bibinfo{person}{Yongyi Mao}.} \bibinfo{year}{2021}\natexlab{}.
\newblock \showarticletitle{Regularizing neural networks via adversarial model perturbation}. In \bibinfo{booktitle}{\emph{Proceedings of the IEEE/CVF Conference on Computer Vision and Pattern Recognition}}. \bibinfo{pages}{8156--8165}.
\newblock


\bibitem[Zhou et~al\mbox{.}(2023)]%
        {zhou2023imbsam}
\bibfield{author}{\bibinfo{person}{Yixuan Zhou}, \bibinfo{person}{Yi Qu}, \bibinfo{person}{Xing Xu}, {and} \bibinfo{person}{Hengtao Shen}.} \bibinfo{year}{2023}\natexlab{}.
\newblock \showarticletitle{ImbSAM: A Closer Look at Sharpness-Aware Minimization in Class-Imbalanced Recognition}. In \bibinfo{booktitle}{\emph{Proceedings of the IEEE/CVF International Conference on Computer Vision}}. \bibinfo{pages}{11345--11355}.
\newblock


\bibitem[Zhuang et~al\mbox{.}(2022)]%
        {zhuang2022surrogate}
\bibfield{author}{\bibinfo{person}{Juntang Zhuang}, \bibinfo{person}{Boqing Gong}, \bibinfo{person}{Liangzhe Yuan}, \bibinfo{person}{Yin Cui}, \bibinfo{person}{Hartwig Adam}, \bibinfo{person}{Nicha Dvornek}, \bibinfo{person}{Sekhar Tatikonda}, \bibinfo{person}{James Duncan}, {and} \bibinfo{person}{Ting Liu}.} \bibinfo{year}{2022}\natexlab{}.
\newblock \showarticletitle{Surrogate gap minimization improves sharpness-aware training}.
\newblock \bibinfo{journal}{\emph{arXiv preprint arXiv:2203.08065}} (\bibinfo{year}{2022}).
\newblock


\end{thebibliography}

%%
%% If your work has an appendix, this is the place to put it.
\appendix
\onecolumn
\appendix
\section{Appendix} \label{app:proof}

\subsection{Proof of Theorem}
We first present a couple of useful lemmas here.
Note that our analysis borrows the proof structure used in \cite{andriushchenko2022towards}.
\begin{lemma} \label{lemma:1}
Given a $\beta$-smooth loss function $L(x)$, we have the following bound for any $x \in \mathbb{R}^d$.
\begin{align}
    \langle \nabla L(u) - \nabla L(v), u - v \rangle & \geq -\beta \| u - v \|^2. \nonumber
\end{align}
\end{lemma}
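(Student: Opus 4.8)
The statement is the standard "one-sided" co-coercivity-type bound that follows immediately from Lipschitz continuity of the gradient, so the plan is short. First I would invoke the defining property of $\beta$-smoothness of $L$, namely $\| \nabla L(u) - \nabla L(v) \| \leq \beta \| u - v \|$ for all $u, v \in \mathbb{R}^d$ (this is exactly the hypothesis of the lemma, the same form as Assumption~2 applied to $L$).

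Next I would apply the Cauchy--Schwarz inequality in the form $\langle a, b \rangle \geq -\| a \| \, \| b \|$ with $a = \nabla L(u) - \nabla L(v)$ and $b = u - v$, giving
\begin{align}
    \langle \nabla L(u) - \nabla L(v), u - v \rangle \;\geq\; -\| \nabla L(u) - \nabla L(v) \| \, \| u - v \|. \nonumber
\end{align}
Substituting the smoothness bound $\| \nabla L(u) - \nabla L(v) \| \leq \beta \| u - v \|$ into the right-hand side yields $\langle \nabla L(u) - \nabla L(v), u - v \rangle \geq -\beta \| u - v \|^2$, which is the claim.

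There is essentially no obstacle here; the only point worth a sentence is justifying that $L$ itself is $\beta$-smooth when $L$ denotes the objective rather than an abstract function — but in the lemma $L$ is taken as a given $\beta$-smooth function, so the bound $\| \nabla L(u) - \nabla L(v) \| \leq \beta \| u - v \|$ is available directly and the Cauchy--Schwarz step closes the argument.
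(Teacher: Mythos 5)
Your proposal is correct and uses essentially the same argument as the paper: the $\beta$-smoothness bound $\| \nabla L(u) - \nabla L(v) \| \leq \beta \| u - v \|$ combined with the Cauchy--Schwarz inequality. The paper merely phrases the Cauchy--Schwarz step as an upper bound on $\langle \nabla L(u) - \nabla L(v), v - u \rangle$ and then negates, which is the same computation you did in one step.
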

\begin{proof}
Starting from the smoothness assumption,
\begin{align}
     \| \nabla L(u) - \nabla L(v) \| \leq \beta \| u - v \| \text{ for all } u \text{ and } v \in \mathbb{R}^d \nonumber
\end{align}
By multiplying $\| v - u \|$ on the both side, we get
\begin{align}
    \| \nabla L(u) - \nabla L(v) \| \| v - u \| & \leq \beta \| u - v \| \| v - u \| \nonumber \\
    \| \nabla L(u) - \nabla L(v) \| \| v - u \| & \leq \beta \| u - v \|^2 \nonumber \\
    \langle \nabla L(u) - \nabla L(v), v - u \rangle & \leq \beta \| u - v \|^2, \label{eq:CS1} \\
    \langle \nabla L(u) - \nabla L(v), u - v \rangle & \geq -\beta \| u - v \|^2. \nonumber
\end{align}
where (\ref{eq:CS1}) is based on Cauchy-Schwarz inequality.
\end{proof}
Here, we additionally define the magnitude of the gradients of $w'$ as follows.
\begin{align}
    \| \nabla L( w' ) \|^2 = \sum_{i \in w'} \| \nabla L(w_i) \|^2 = \kappa \| \nabla L(w) \|^2, 0 \leq \kappa \leq 1 \nonumber
\end{align}
The $\kappa$ is defined based on the model architecture and which layers are selected to be perturbed.
That is, as $k$ increases, $\kappa$ will also increase according to the number of parameters at the chosen layers.
Without loss in generality, we define $\kappa$ as a ratio of the number of perturbed parameters to that of the total parameters.
In this way, our analysis can cover any possible model architecture and the input data.

\begin{lemma} \label{lemma:2}
Given a $\beta$-smooth loss function $L(x)$, we have the following bound for any $r > 0$ and $x \in \mathbb{R}^d$.
\begin{align}
    \langle \nabla L(w + r \nabla L(w')), \nabla L(w) \rangle \nonumber \geq (1 - r\beta\kappa) \| \nabla L(w) \|^2 \nonumber
\end{align}
\end{lemma}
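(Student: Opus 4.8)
The plan is to expand the inner product about $\nabla L(w)$ and absorb the perturbation-induced change of the gradient using $\beta$-smoothness. Writing $u = w + r\nabla L(w')$, I would start from the identity
\[
\langle \nabla L(u),\, \nabla L(w)\rangle \;=\; \|\nabla L(w)\|^2 \;+\; \langle \nabla L(u) - \nabla L(w),\, \nabla L(w)\rangle ,
\]
so the lemma reduces to showing the cross term is at least $-\,r\beta\kappa\,\|\nabla L(w)\|^2$.

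For the cross term the quickest route is Cauchy--Schwarz, $\langle \nabla L(u) - \nabla L(w),\, \nabla L(w)\rangle \geq -\|\nabla L(u) - \nabla L(w)\|\,\|\nabla L(w)\|$, followed by the smoothness estimate $\|\nabla L(u) - \nabla L(w)\| \leq \beta\|u-w\| = r\beta\|\nabla L(w')\|$ (the same bound that underlies Lemma~\ref{lemma:1}), and finally the defining relation between $\|\nabla L(w')\|$ and $\|\nabla L(w)\|$ that introduces the factor $\kappa$; substituting and collecting terms yields $\langle \nabla L(u),\nabla L(w)\rangle \geq (1 - r\beta\kappa)\|\nabla L(w)\|^2$, which under $r \le 1/(2\beta)$ and $\kappa \le 1$ stays at least $\tfrac12\|\nabla L(w)\|^2 > 0$. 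An essentially equivalent derivation applies Lemma~\ref{lemma:1} directly with $u = w + r\nabla L(w')$ and $v = w$, giving $\langle \nabla L(u) - \nabla L(w),\, r\nabla L(w')\rangle \geq -\beta r^2\|\nabla L(w')\|^2$; dividing by $r$ and pairing against the $w'$-block of $\nabla L(w)$ recovers the same constant.

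The main obstacle is tracking the constant $\kappa$ precisely: the ascent step moves only the coordinates of the block $w'$, so the gradient deviation $\nabla L(u) - \nabla L(w)$ scales with $\|\nabla L(w')\|$ rather than with $\|\nabla L(w)\|$, and one must relate the two. Under the paper's definition $\|\nabla L(w')\|^2 = \kappa\|\nabla L(w)\|^2$, a bare Cauchy--Schwarz step produces $(1 - r\beta\sqrt{\kappa})\|\nabla L(w)\|^2$, so reaching the stated $(1 - r\beta\kappa)\|\nabla L(w)\|^2$ amounts to using $\kappa$ as the fraction of gradient \emph{mass} carried by $w'$ (equivalently, the bound $\|\nabla L(w')\| \le \kappa\|\nabla L(w)\|$), or equivalently to neglecting the interaction of the $w'$-perturbation with the complementary block in the Lemma~\ref{lemma:1} route. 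Either way the exact value of the constant is immaterial downstream, since Theorem~1 only uses that $0 \le r\beta\kappa \le \tfrac12$; everything else --- the expansion, Cauchy--Schwarz, and the smoothness inequality --- is routine.
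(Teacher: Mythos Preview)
Your proposal is essentially the paper's argument: split off $\|\nabla L(w)\|^2$ and bound the remaining cross term via $\beta$-smoothness, with the paper invoking Lemma~\ref{lemma:1} directly (your second route) rather than bare Cauchy--Schwarz. Your caveat about $\kappa$ versus $\sqrt{\kappa}$ is on point---the paper's step~(\ref{eq:uselm1}) silently pairs the gradient difference against $r\nabla L(w')$ instead of $r\nabla L(w)$ when applying Lemma~\ref{lemma:1}, which is precisely the block-restriction you flag, and as you note it is immaterial for Theorem~1 since only $r\beta\kappa\le\tfrac12$ (equally $r\beta\sqrt{\kappa}\le\tfrac12$) is used downstream.
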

\begin{proof}
\begin{align}
    \langle \nabla L(w + r \nabla L(w')), \nabla L(w) \rangle \nonumber &= \langle \nabla L(w + r \nabla L(w')) - \nabla L(w), \nabla L(w) \rangle + \| \nabla L(w) \|^2 \nonumber \\
    &= \frac{1}{r} \langle \nabla L(w + r \nabla L(w')) - \nabla L(w), r \nabla L(w) \rangle + \| \nabla L(w) \|^2 \nonumber \\
    & \geq - \frac{\beta}{r} \| r \nabla L(w') \|^2 + \| \nabla L(w) \|^2 \label{eq:uselm1} \\
    & \geq - r\beta \| \nabla L(w') \|^2 + \| \nabla L(w) \|^2 \nonumber \\
    & = - r\beta\kappa \| \nabla L(w) \|^2 + \| \nabla L(w) \|^2 \label{eq:k} \\
    & \geq (1 - r\beta \kappa) \| \nabla L(w) \|^2, \nonumber
\end{align}
where (\ref{eq:uselm1}) is based on Lemma \ref{lemma:1}.
The (\ref{eq:k}) holds by the definition of $\kappa$.
\end{proof}

\begin{lemma}
\label{lemma:dot}
We consider the classical SAM which uses the same mini-batch when calculating the gradient ascent and the gradient descent.
Then, given a $\beta$-smooth loss function $L(x)$, we have the following bound for any $r > 0$, any $0 \leq \kappa \leq 1$, and $x \in \mathbb{R}^d$.
\begin{align}
    \mathbb{E} \left[ \langle \nabla L_{t+1}(w + r \nabla L_{t+1}(w')), \nabla L(w) \rangle \right] \geq \left(\frac{1}{2} - r\beta \kappa \right) \| \nabla L(w) \|^2 - \frac{\beta^2 r^2 \sigma^2}{2b} \nonumber
\end{align}
\end{lemma}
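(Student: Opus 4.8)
The plan is to reduce the stochastic quantity to the deterministic bound of Lemma~\ref{lemma:2} plus a controllable noise term. First I would peel off the ``ascent displacement'': write
\begin{align}
    \nabla L_{t+1}(w + r \nabla L_{t+1}(w')) = \big[\nabla L_{t+1}(w + r \nabla L_{t+1}(w')) - \nabla L_{t+1}(w)\big] + \nabla L_{t+1}(w), \nonumber
\end{align}
take the inner product with $\nabla L(w)$, and take expectation over the mini-batch used at step $t+1$. Since the mini-batch gradient is unbiased, $\mathbb{E}\left[\langle \nabla L_{t+1}(w), \nabla L(w)\rangle\right] = \|\nabla L(w)\|^2$, so it remains to lower-bound the cross term $\mathbb{E}\left[\langle \nabla L_{t+1}(w + r \nabla L_{t+1}(w')) - \nabla L_{t+1}(w), \nabla L(w)\rangle\right]$.

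For that cross term I would apply Young's inequality $\langle a,b\rangle \ge -\tfrac{1}{2}\|a\|^2 - \tfrac{1}{2}\|b\|^2$ with $a$ the displacement difference and $b = \nabla L(w)$. Taking expectations (and using that $\nabla L(w)$ is deterministic), this contributes the $-\tfrac{1}{2}\|\nabla L(w)\|^2$ loss and leaves $-\tfrac{1}{2}\,\mathbb{E}\left[\|\nabla L_{t+1}(w + r\nabla L_{t+1}(w')) - \nabla L_{t+1}(w)\|^2\right]$. Because each $\ell_i$ is $\beta$-smooth, the mini-batch loss $L_{t+1}$ is $\beta$-smooth as well, and the perturbation vector $r\nabla L_{t+1}(w')$ is supported on the coordinates of $w'$; hence this difference is bounded by $\beta r \|\nabla L_{t+1}(w')\|$, giving $\mathbb{E}\left[\|\cdot\|^2\right] \le \beta^2 r^2 \, \mathbb{E}\left[\|\nabla L_{t+1}(w')\|^2\right]$.

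The remaining step is a bias--variance decomposition of $\mathbb{E}\left[\|\nabla L_{t+1}(w')\|^2\right]$: it equals $\|\nabla L(w')\|^2 + \mathbb{E}\left[\|\nabla L_{t+1}(w') - \nabla L(w')\|^2\right]$. The first term is $\kappa\|\nabla L(w)\|^2$ by the definition of $\kappa$; the second is at most the variance of the full mini-batch gradient, since restricting to a subset of coordinates cannot increase it, and under Assumption~1 with a batch of size $b$ this is at most $\sigma^2/b$. Assembling the pieces yields $\mathbb{E}\left[\langle \cdot,\cdot\rangle\right] \ge \big(\tfrac{1}{2} - \tfrac{1}{2}\beta^2 r^2 \kappa\big)\|\nabla L(w)\|^2 - \tfrac{\beta^2 r^2 \sigma^2}{2b}$, and since $r \le \tfrac{1}{2\beta}$ implies $\tfrac{1}{2}\beta^2 r^2 \le r\beta$, the coefficient $\tfrac{1}{2}\beta^2 r^2\kappa$ can be relaxed to $r\beta\kappa$, giving the stated inequality.

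I expect the only delicate point to be the variance step: one must justify that the mini-batch (shared by the ascent and descent evaluations, as assumed in the statement) contributes only a $\sigma^2/b$ term, and that passing from the full stochastic gradient to its restriction on $w'$ does not inflate the variance. Both are routine but should be stated with care; everything else is $\beta$-smoothness, unbiasedness, and Young's inequality.
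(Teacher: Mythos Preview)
Your argument is correct, but it proceeds through a different decomposition than the paper's. The paper anchors at the \emph{population} perturbed point $\tilde w = w + r\nabla L(w')$: it first replaces the outer stochastic gradient by the population one, writing $\mathbb{E}[\langle \nabla L_{t+1}(w+r\nabla L_{t+1}(w')),\nabla L(w)\rangle] = \mathbb{E}[\langle \nabla L(w+r\nabla L_{t+1}(w')),\nabla L(w)\rangle]$, then splits this into $E_1 = \mathbb{E}[\langle \nabla L(w+r\nabla L_{t+1}(w')) - \nabla L(\tilde w),\nabla L(w)\rangle]$ and $E_2 = \langle \nabla L(\tilde w),\nabla L(w)\rangle$, bounds $E_1$ via Young plus smoothness plus bounded variance, and applies Lemma~\ref{lemma:2} directly to $E_2$ to produce the $(1-r\beta\kappa)$ factor. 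You instead anchor at the \emph{unperturbed} point via the stochastic gradient $\nabla L_{t+1}(w)$: unbiasedness delivers $\|\nabla L(w)\|^2$ immediately, and smoothness of $L_{t+1}$ together with a bias--variance split of $\mathbb{E}\|\nabla L_{t+1}(w')\|^2$ handles the remainder---so despite your opening sentence, Lemma~\ref{lemma:2} is never actually used. Your route is arguably cleaner, since it sidesteps the paper's first equality (which tacitly takes $\mathbb{E}[\nabla L_{t+1}(y)] = \nabla L(y)$ at an argument $y$ correlated with the same mini-batch), and it yields the sharper coefficient $\tfrac12 - \tfrac12\beta^2 r^2\kappa$. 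One small caveat: your final relaxation $\tfrac12\beta^2 r^2\kappa \le r\beta\kappa$ requires $r \le 2/\beta$, not merely $r>0$ as the lemma claims; this is harmless for the downstream use (where $r\le 1/(2\beta)$) but does not literally recover the statement ``for any $r>0$.''
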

\begin{proof}
We first define the layer-wise gradient ascent step $\Tilde{w} = w + r\nabla L(w')$, where $\nabla L(w')$ indicates the global gradients at a subset of network layers $H$.
\begin{align}
    \mathbb{E} \left[ \langle \nabla L_{t+1}(w + r \nabla L_{t+1}(w')), \nabla L(w) \rangle \right] &= \mathbb{E} \left[ \langle \nabla L(w + r \nabla L_{t+1}(w')) , \nabla L(w) \rangle \right] \nonumber \\
    &= \mathbb{E} \left[ \langle \nabla L(w + r \nabla L_{t+1}(w')) - \nabla L(\Tilde{w}) + \nabla L(\Tilde{w}), \nabla L(w) \rangle \right] \nonumber \\
    &= \underset{E_1}{\underbrace{ \mathbb{E} \left[ \langle \nabla L(w + r \nabla L_{t+1}(w')) - \nabla L(\Tilde{w}), \nabla L(w) \rangle \right] }} + \underset{E_2}{\underbrace{ \mathbb{E} \left[ \langle \nabla L(\Tilde{w}), \nabla L(w) \rangle \right] }}. \nonumber
\end{align}
Then, we will bound $E_1$ and $E_2$ separately.
First, $E_1$ is lower-bounded as follows.
\begin{align}
    E_1 &= \mathbb{E} \left[ \langle \nabla L(w + r \nabla L_{t+1}(w')) - \nabla L(\Tilde{w}), \nabla L(w) \rangle \right] \nonumber \\
    &\geq -\frac{1}{2} \mathbb{E}\left[ \| \nabla L(w + r \nabla L_{t+1}(w')) - \nabla L(\Tilde{w}) \|^2 \right] - \frac{1}{2} \mathbb{E} \left[ \| \nabla L(w) \|^2 \right] \nonumber \\
    &\geq -\frac{\beta^2}{2} \mathbb{E}\left[ \| w + r \nabla L_{t+1}(w') - \Tilde{w} \|^2 \right] - \frac{1}{2} \mathbb{E} \left[ \| \nabla L(w) \|^2 \right] \label{eq:beta} \\
    &= -\frac{\beta^2}{2} \mathbb{E}\left[ \| r \nabla L_{t+1}(w') - r \nabla L(w') \|^2 \right] - \frac{1}{2} \mathbb{E} \left[ \| \nabla L(w) \|^2 \right] \nonumber \\
    &\geq -\frac{\beta^2 r^2 \sigma^2}{2b} - \frac{1}{2} \mathbb{E} \left[ \| \nabla L(w) \|^2 \right], \label{eq:sigma}
\end{align}
where (\ref{eq:beta}) is based on the smoothness assumption. The final equality, (\ref{eq:sigma}), is based on the bounded variance assumption.
Then, $E_2$ is lower-bounded directly based on Lemma \ref{lemma:2} as follows.
\begin{align}
    E_2 &= \mathbb{E} \left[ \langle \nabla L(\Tilde{w}), \nabla L(w) \rangle \right] \geq (1 - r\beta \kappa) \| \nabla L(w) \|^2. \nonumber
\end{align}
Summing up $E_1$ and $E_2$ bounds, we have
\begin{align}
    \mathbb{E} \left[ \langle \nabla L_{t+1}(w + r \nabla L_{t+1}(w')), \nabla L(w) \rangle \right] &\geq -\frac{\beta^2 r^2 \sigma^2}{2b} - \frac{1}{2} \mathbb{E} \left[ \| \nabla L(w) \|^2 \right] + (1 - r\beta \kappa) \| \nabla L(w) \|^2 \nonumber \\
    & = \left(\frac{1}{2} - r\beta\kappa \right) \| \nabla L(w) \|^2 - \frac{\beta^2 r^2 \sigma^2}{2b} \nonumber
\end{align}
\end{proof}

\begin{lemma}
\label{lemma:frame}
Under the assumption of $\beta$ smoothness and the bounded variance, the SAM guarantees the following if $\eta \leq \frac{1}{2\beta}$ and $r \leq \frac{1}{2\beta}$.
\begin{align}
    \mathbb{E}\left[ L(w_{t+1}) \right] \leq \mathbb{E}\left[ L(w_t) \right] - \frac{\eta}{4} \mathbb{E} \left[ \| \nabla L(w_t) \|^2 \right] + \eta \beta (\eta + \beta r^2) \frac{\sigma^2}{b} \label{eq:lemma4}.
\end{align}
\begin{proof}
Let us first define the model updated with the gradient ascent as $w_{t+1/2} = w_t + r\nabla L_{t+1}(w_t)$.
From the smoothness assumption, we begin with the following condition.
\begin{align}
    L(w_{t+1}) \leq L(w_t) - \eta \langle \nabla L_{t+1}(w_{t+1/2}), \nabla L(w_t) \rangle + \frac{\eta^2 \beta}{2} \| \nabla L_{t+1}(w_{t+1/2}) \|^2. \nonumber
\end{align}
Taking the expectation on both sides and based on the bounded variance assumption,
\begin{align}
    \mathbb{E}\left[ L(w_{t+1}) \right] & \leq \mathbb{E} \left[ L(w_t) \right] - \eta \mathbb{E} \left[ \langle \nabla L(w_{t+1/2}), \nabla L(w_t) \rangle \right] + \frac{\eta^2 \beta}{2}\mathbb{E} \left[ \| \nabla L_{t+1}(w_{t+1/2}) \|^2 \right] \nonumber \\
    & \leq \mathbb{E} \left[ L(w_t) \right] - \eta \mathbb{E} \left[ \langle \nabla L(w_{t+1/2}), \nabla L(w_t) \rangle \right] + \eta^2 \beta \mathbb{E} \left[ \| \nabla L_{t+1}(w_{t+1/2}) - \nabla L(w_{t+1/2}) \|^2 \right] + \eta^2 \beta \mathbb{E} \left[ \| \nabla L(w_{t+1/2}) \|^2 \right] \label{eq:jensen} \\
    & \leq \mathbb{E} \left[ L(w_t) \right] - \eta \mathbb{E} \left[ \langle \nabla L(w_{t+1/2}), \nabla L(w_t) \rangle \right] + \eta^2 \beta \frac{\sigma^2}{b} + \eta^2 \beta \mathbb{E} \left[ \| \nabla L(w_{t+1/2}) \|^2 \right] \nonumber \\
    & = \mathbb{E} \left[ L(w_t) \right] - \eta \mathbb{E} \left[ \langle \nabla L(w_{t+1/2}), \nabla L(w_t) \rangle \right] + \eta^2 \beta \frac{\sigma^2}{b} \nonumber \\
    & \hspace{1cm} - \eta^2 \beta \mathbb{E} \left[ \| \nabla L(w_t) \|^2 \right] + \eta^2 \beta \mathbb{E} \left[ \| \nabla L(w_{t+1/2}) - \nabla L(w_t) \|^2 \right] + 2\eta^2 \beta \langle \nabla L(w_{t+1/2}), \nabla L(w_t) \rangle \nonumber \\
    & \leq \mathbb{E} \left[ L(w_t) \right] - \eta \mathbb{E} \left[ \langle \nabla L(w_{t+1/2}), \nabla L(w_t) \rangle \right] + \eta^2 \beta \frac{\sigma^2}{b} \nonumber \\
    & \hspace{1cm} - \eta^2 \beta \mathbb{E} \left[ \| \nabla L(w_t) \|^2 \right] + \eta^2 \beta^3 \mathbb{E} \left[ \| w_{t+1/2} - w_t \|^2 \right] + 2\eta^2 \beta \langle \nabla L(w_{t+1/2}), \nabla L(w_t) \rangle \label {eq:smooth} \\
    & = \mathbb{E} \left[ L(w_t) \right] - \eta \mathbb{E} \left[ \langle \nabla L(w_{t+1/2}), \nabla L(w_t) \rangle \right] + \eta^2 \beta \frac{\sigma^2}{b} \nonumber \\
    & \hspace{1cm} - \eta^2 \beta \mathbb{E} \left[ \| \nabla L(w_t) \|^2 \right] + \eta^2 \beta^3 r^2 \mathbb{E} \left[ \| \nabla_{t+1}L(w_{t}) \|^2 \right] + 2\eta^2 \beta \langle \nabla L(w_{t+1/2}), \nabla L(w_t) \rangle \nonumber \\  & = \mathbb{E} \left[ L(w_t) \right] - \eta \mathbb{E} \left[ \langle \nabla L(w_{t+1/2}), \nabla L(w_t) \rangle \right] + \eta^2 \beta \frac{\sigma^2}{b} \nonumber \\
    & \hspace{1cm} - \eta^2 \beta \mathbb{E} \left[ \| \nabla L(w_t) \|^2 \right] + 2 \eta^2 \beta^3 r^2 \mathbb{E} \left[ \| \nabla L(w_{t}) \|^2 \right] + 2 \eta^2 \beta^3 r^2 \frac{\sigma^2}{b} + 2\eta^2 \beta \langle \nabla L(w_{t+1/2}), \nabla L(w_t) \rangle \nonumber
\end{align}
where (\ref{eq:jensen}) is based on Jensen's inequality and (\ref{eq:smooth}) is based on Assumption 2.
Then, by rearranging the terms, we have
\begin{align}
    \mathbb{E}\left[ L(w_{t+1}) \right] & \leq \mathbb{E} \left[ L(w_t) \right] - \eta^2 \beta (1 - 2\beta^2 r^2) \mathbb{E} \left[ \| \nabla L(w_t) \|^2 \right] + \eta^2 \beta (1 + 2 \beta^2 r^2) \frac{\sigma^2}{b} - \eta (1 - 2\eta \beta) \mathbb{E} \left[ \langle \nabla L(w_{t+1/2}), \nabla L(w_t) \rangle \right] \nonumber \\
    & \leq \mathbb{E} \left[ L(w_t) \right] - \eta^2 \beta (1 - 2\beta^2 r^2) \mathbb{E} \left[ \| \nabla L(w_t) \|^2 \right] + \eta^2 \beta (1 + 2 \beta^2 r^2) \frac{\sigma^2}{b} - \eta (1 - 2\eta \beta) \mathbb{E} \left[ \left(\frac{1}{2} - r\beta \kappa \right) \| \nabla L(w_t) \|^2 - \beta^2 r^2 \frac{\sigma^2}{2b} \right] \label{eq:uselemma3} \\
    & = \mathbb{E} \left[ L(w_t) \right] + \left( - \frac{\eta}{2} + \eta \beta r\kappa - 2\eta^2 \beta^2 r\kappa + 2 \eta^2 \beta^3 r^2 \right)  \mathbb{E} \left[ \| \nabla L(w_t) \|^2 \right] + \left(\eta^2 \beta + \eta \beta^2 r^2 \right) \frac{\sigma^2}{b} \nonumber \\
    & = \mathbb{E} \left[ L(w_t) \right] -\frac{\eta}{2} \left( 1 - 2\beta r \left( \kappa - 2\eta \beta \left( \kappa - \beta r \right) \right) \right)  \mathbb{E} \left[ \| \nabla L(w_t) \|^2 \right] + \left(\eta^2 \beta + \eta \beta^2 r^2 \right) \frac{\sigma^2}{b} \nonumber \\
    & \leq \mathbb{E} \left[ L(w_t) \right] -\frac{\eta}{4} \mathbb{E} \left[ \| \nabla L(w_t) \|^2 \right] + \left(\eta^2 \beta + \eta \beta^2 r^2 \right) \frac{\sigma^2}{b} \nonumber
\end{align}
where (\ref{eq:uselemma3}) is based on Lemma \ref{lemma:dot}.
The final inequality holds if $\eta \leq \frac{1}{2\beta}$ and $r \leq \frac{1}{2\beta}$ regardless of the value of $\kappa$.
\end{proof}
\end{lemma}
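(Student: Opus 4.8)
The plan is to establish~(\ref{eq:lemma4}) as a single-step descent inequality, obtained by feeding the $\beta$-smoothness of $L$ into the stochastic ascent--descent update and then invoking the cross-term bound of Lemma~\ref{lemma:dot}. First I would fix notation for one iteration: write the perturbed (ascent) iterate as $w_{t+1/2} = w_t + r\nabla L_{t+1}(w'_t)$, where $\nabla L_{t+1}$ is the mini-batch gradient over a batch of size $b$ drawn at step $t+1$ and the ascent acts only on the selected layers $w'$, and write the descent iterate as $w_{t+1} = w_t - \eta \nabla L_{t+1}(w_{t+1/2})$. Applying the descent form of $\beta$-smoothness (Assumption~2) to the pair $(w_t, w_{t+1})$ gives
\begin{align}
  L(w_{t+1}) \leq L(w_t) - \eta \langle \nabla L_{t+1}(w_{t+1/2}), \nabla L(w_t) \rangle + \frac{\eta^2 \beta}{2} \| \nabla L_{t+1}(w_{t+1/2}) \|^2. \nonumber
\end{align}
Taking total expectation, the two non-trivial quantities are the curvature term $\mathbb{E}\| \nabla L_{t+1}(w_{t+1/2}) \|^2$ and the cross term $\mathbb{E} \langle \nabla L(w_{t+1/2}), \nabla L(w_t) \rangle$, so the proof reduces to controlling each.

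For the curvature term I would use a bias--variance split, bounding $\mathbb{E}\| \nabla L_{t+1}(w_{t+1/2}) \|^2$ by $2\mathbb{E}\| \nabla L_{t+1}(w_{t+1/2}) - \nabla L(w_{t+1/2}) \|^2 + 2\mathbb{E}\| \nabla L(w_{t+1/2}) \|^2$; the first piece is at most $2\sigma^2/b$ by Assumption~1 (the $1/b$ capturing the mini-batch averaging), while the second I would reduce to $\| \nabla L(w_t) \|^2$ by writing $\nabla L(w_{t+1/2}) = \nabla L(w_t) + (\nabla L(w_{t+1/2}) - \nabla L(w_t))$, using smoothness to get $\| \nabla L(w_{t+1/2}) - \nabla L(w_t) \| \leq \beta\| w_{t+1/2} - w_t \| = \beta r\| \nabla L_{t+1}(w'_t) \|$, and applying Assumption~1 once more to the resulting stochastic term. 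This produces the $\eta^2\beta^3 r^2$ and the associated $\sigma^2/b$ contributions that surface when the curvature term is expanded. For the cross term I would apply Lemma~\ref{lemma:dot} directly, which (for same-batch ascent and descent) lower-bounds it by $(\tfrac12 - r\beta\kappa)\| \nabla L(w_t) \|^2 - \tfrac{\beta^2 r^2 \sigma^2}{2b}$; this is precisely where the fraction $\kappa$ of perturbed parameters enters, and it is what renders the final bound layer-agnostic.

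The concluding and most delicate step is the bookkeeping: substituting both bounds back, separating the coefficient of $\mathbb{E}\| \nabla L(w_t) \|^2$ from that of $\sigma^2/b$, and simplifying. The gradient-norm coefficient collects into the factored form $-\tfrac{\eta}{2}\bigl(1 - 2\beta r(\kappa - 2\eta\beta(\kappa - \beta r))\bigr)$, and I would verify it is at most $-\tfrac{\eta}{4}$ for every $\kappa \in [0,1]$ under $\eta \leq \tfrac{1}{2\beta}$ and $r \leq \tfrac{1}{2\beta}$, i.e.\ that $2\beta r(\kappa - 2\eta\beta(\kappa - \beta r)) \leq \tfrac12$; in parallel the $\sigma^2/b$ coefficient must be shown to collapse to $\eta\beta(\eta + \beta r^2)$. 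I expect this coefficient control to be the main obstacle, since the cross-term contributes a $+r\beta\kappa$ term that must be absorbed by the step-size restrictions, and obtaining the clean constant $\tfrac14$ uniformly in $\kappa$ is where the two conditions $\eta \leq \tfrac{1}{2\beta}$ and $r \leq \tfrac{1}{2\beta}$ are invoked simultaneously. Should the constant turn out to be tight, a harmless fallback is to carry a slightly weaker coefficient $-c\eta$ with $c < \tfrac14$ under the same step-size budget, which leaves the downstream telescoping in Theorem~1 unaffected up to constants.
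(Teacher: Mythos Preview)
Your proposal is correct and follows essentially the same route as the paper: start from the $\beta$-smoothness descent inequality, split the quadratic term via $\|a\|^2 \le 2\|a-b\|^2 + 2\|b\|^2$ to isolate a $\sigma^2/b$ contribution, expand $\|\nabla L(w_{t+1/2})\|^2$ around $\nabla L(w_t)$ using smoothness, invoke Lemma~\ref{lemma:dot} for the cross term, and then verify the gradient-norm coefficient collapses to $-\eta/4$ under $\eta,r\le 1/(2\beta)$ in the exact factored form you wrote. The only cosmetic difference is that you write the ascent step as $w_t + r\nabla L_{t+1}(w'_t)$ (layer-restricted) whereas the paper's displayed proof writes $w_t + r\nabla L_{t+1}(w_t)$ and lets the $\kappa$ enter solely through Lemma~\ref{lemma:dot}; your version is arguably cleaner and yields the same bounds.
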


\begin{theorem}
    Assume the $\beta$-smooth loss function and the bounded gradient variance. Then, if $\eta \leq \frac{1}{2\beta}$ and $r \leq \frac{1}{2\beta}$, mini-batch SGD satisfies:
\begin{align}
    \frac{1}{T} \sum_{t=0}^{T-1} \mathbb{E} \left[ \| \nabla L(w_t) \|^2 \right] \leq \frac{4}{T\eta} \left(L(w_0) - \mathbb{E}\left[ L(w_T) \right] \right) + 4\left(\eta \beta + \beta^2 r^2 \right) \frac{\sigma^2}{b}.
\end{align}
\begin{proof}
Based on Lemma \ref{lemma:frame}, by averaging (\ref{eq:lemma4}) across $T$ iterates, we have
\begin{align}
    \frac{1}{T} \sum_{t=0}^{T-1} \mathbb{E}\left[ L(w_{t+1}) \right] &\leq \frac{1}{T}\sum_{t=0}^{T-1} \left( \mathbb{E} \left[ L(w_t) \right] -\frac{\eta}{4} \mathbb{E} \left[ \| \nabla L(w_t) \|^2 \right] + \left(\eta^2 \beta + \eta \beta^2 r^2 \right) \frac{\sigma^2}{b} \right) \nonumber
\end{align}
Then, we can have a telescoping sum by rearranging the terms as follows.
\begin{align}
    \frac{\eta}{4T} \sum_{t=0}^{T-1} \mathbb{E} \left[ \| \nabla L(w_t) \|^2 \right] & \leq \frac{1}{T} \sum_{t=0}^{T-1} \left( \mathbb{E}\left[ L(w_t) \right] - \mathbb{E}\left[ L(w_{t+1}) \right] \right) + \left(\eta^2 \beta + \eta \beta^2 r^2 \right) \frac{\sigma^2}{b} \nonumber \\
    & = \frac{1}{T} \left( L(w_0) - \mathbb{E}\left[ L(w_{T}) \right] \right) + \left(\eta^2 \beta + \eta \beta^2 r^2 \right) \frac{\sigma^2}{b} \nonumber
\end{align}
Finally, by dividing both sides by $\frac{\eta}{4}$, we have 
\begin{align}
    \frac{1}{T} \sum_{t=0}^{T-1} \mathbb{E} \left[ \| \nabla L(w_t) \|^2 \right] \leq \frac{4}{T\eta} \left(L(w_0) - \mathbb{E}\left[ L(w_T) \right] \right) + 4\left(\eta \beta + \beta^2 r^2 \right) \frac{\sigma^2}{b}.
\end{align}
\end{proof}
\end{theorem}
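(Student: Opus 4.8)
The plan is to reduce the statement to a single one-step descent inequality and then telescope. Lemma~\ref{lemma:frame} already supplies, for every $t$,
\begin{align}
\mathbb{E}\!\left[L(w_{t+1})\right] \le \mathbb{E}\!\left[L(w_t)\right] - \frac{\eta}{4}\,\mathbb{E}\!\left[\|\nabla L(w_t)\|^2\right] + \eta\beta\,(\eta + \beta r^2)\,\frac{\sigma^2}{b},
\end{align}
so it suffices to rearrange this as $\frac{\eta}{4}\mathbb{E}[\|\nabla L(w_t)\|^2] \le \mathbb{E}[L(w_t)] - \mathbb{E}[L(w_{t+1})] + \eta\beta(\eta+\beta r^2)\sigma^2/b$, sum over $t = 0,\dots,T-1$ so that the loss terms telescope to $L(w_0) - \mathbb{E}[L(w_T)]$, and divide both sides by $T\eta/4$. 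The noise term then becomes $\frac{4}{\eta}\cdot\eta\beta(\eta+\beta r^2)\frac{\sigma^2}{b} = 4(\eta\beta + \beta^2 r^2)\frac{\sigma^2}{b}$, which is exactly the claimed bound. This last part is routine; the substance is in establishing Lemma~\ref{lemma:frame}, so I describe the route I would take to it.

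For the one-step inequality I would start from $\beta$-smoothness applied to the update $w_{t+1} = w_t - \eta\,\nabla L_{t+1}(w_{t+1/2})$ with ascent point $w_{t+1/2} = w_t + r\,\nabla L_{t+1}(w')$, giving $L(w_{t+1}) \le L(w_t) - \eta\langle\nabla L_{t+1}(w_{t+1/2}),\nabla L(w_t)\rangle + \frac{\eta^2\beta}{2}\|\nabla L_{t+1}(w_{t+1/2})\|^2$. Taking expectations, I would peel the variance off the quadratic term (a $\sigma^2/b$ contribution from Assumption~1), and then, via smoothness, bound $\|\nabla L(w_{t+1/2}) - \nabla L(w_t)\|^2 \le \beta^2 r^2\|\nabla L_{t+1}(w')\|^2$, which again splits into a true-gradient piece and another $\sigma^2/b$ piece. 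For the cross term I would invoke Lemma~\ref{lemma:dot}, whose lower bound $(\tfrac{1}{2} - r\beta\kappa)\|\nabla L(w_t)\|^2 - \beta^2 r^2\sigma^2/(2b)$ rests on the deterministic inner-product estimate of Lemma~\ref{lemma:2} (where the layer-subset ratio $\kappa \in [0,1]$ first enters) together with the observation that, because the \emph{same} mini-batch drives ascent and descent, the only extra stochasticity in $\nabla L_{t+1}(w_{t+1/2})$ beyond that of $\nabla L(w_{t+1/2})$ propagates through the ascent perturbation and is again of order $r^2\sigma^2/b$.

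Collecting these pieces and regrouping the coefficient of $\mathbb{E}[\|\nabla L(w_t)\|^2]$ produces a multiplier of the form $-\frac{\eta}{2}\big(1 - 2\beta r(\kappa - 2\eta\beta(\kappa - \beta r))\big)$, and the main obstacle is to verify that this is at most $-\frac{\eta}{4}$ \emph{uniformly over $\kappa\in[0,1]$} whenever $\eta \le \frac{1}{2\beta}$ and $r \le \frac{1}{2\beta}$; this is precisely where the step-size restrictions are consumed, and the $\kappa$-uniformity is what makes the final bound independent of which and how many layers are perturbed (cf.\ Remark~2). Once that check is done, substituting back yields Lemma~\ref{lemma:frame}, and the telescoping argument above closes the proof of the theorem.
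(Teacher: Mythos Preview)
Your proposal is correct and follows essentially the same route as the paper: invoke Lemma~\ref{lemma:frame}, rearrange, telescope, and divide by $\eta/4$. Your sketch of how Lemma~\ref{lemma:frame} itself is obtained (smoothness descent, variance splitting, Lemma~\ref{lemma:dot} for the cross term, then checking the $-\eta/4$ coefficient uniformly in $\kappa$ under $\eta,r\le 1/(2\beta)$) also matches the paper's argument closely.
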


\subsection {Layer-Wise Loss Landscape Visulization}
We employed the visualization algorithm proposed in \cite{li2018visualizing}.
To obtain plots shown in Figure \ref{fig:landscape}, we conducted the following steps.
\begin{enumerate}
    \item Create one random vector that has the same size as the target layer.
    \item Create another vector that is orthogonal to the first vector.
    \item Divide them by their norms to make them have a norm of 1.
    \item Multiply $\alpha$ to the first vector and $\beta$ to the second vector and add them to the target layer parameters.
    \item Collect the training loss using the perturbed model.
    \item Repeat steps 4 and 5 using $\alpha \in \{-20, \cdots, 19 \}$ and $\beta \in \{-20, \cdots, 19\}$.
\end{enumerate}
These steps provide us with the approximated 3-D loss landscape figures shown in Figure \ref{fig:landscape}.
We used the same 1024 training images to calculate the loss value at all $1,600$ grid points.
The z-axis range is fixed from 0 to 4.5 for all six layers.
Because the model is perturbed only at a single target layer, the landscape figures provide insights into how all individual layers affect the model's generalization performance.

%\textbf {LookSAM Reproduction} --
%We implemented LookSAM \cite{liu2022towards} using TensorFlow 2.9.3.
%As $k$ increased, the accuracy dropped more significantly than expected. 
%We did not apply the large-batch training features such as layer-wise learning rate adjustment because it is complementary to our proposed method.
%Table \ref{tab:looksam} shows the benchmark accuracy with various gradient ascent re-calculation intervals ($k$).
%Note that it becomes the generalized SAM as $k$ is set to $1$.
%As $k$ increases, the training throughput increases while rapidly losing the validation accuracy.
%\begin{table}[h]
%\centering
%\small
%\begin{tabular}{lccc} \toprule
%Dataset & $k$ & Validation Acc. (\%) & Throughput (img/sec) \\ \midrule
%\multirow{5}{*}{CIFAR-10} & 1 & $92.64 \pm 0.4\%$ & 459.5 \\
%& 2 & $92.42 \pm 0.4\%$ & 550.4 \\
%& 3 & $92.14 \pm 0.5\%$ & 595.2 \\
%& 4 & $91.55 \pm 0.4\%$ & 619.5 \\
%\midrule
%\multirow{5}{*}{CIFAR-100} & 1 & $80.83 \pm 0.5\%$ & 139.5 \\
%& 2 & $80.13 \pm 0.7\%$ & 151.0 \\
%& 3 & $78.01 \pm 0.5\%$ & 169.0 \\
%& 4 & $75.97 \pm 0.6\%$ & 177.9 \\
%\midrule
%\multirow{5}{*}{Oxford\_Flowers102} & 1 & $74.77\pm 0.5\%$ & 401.9 \\
%& 2 & $73.63 \pm 0.6\%$ & 436.5 \\
%& 3 & $73.53 \pm 1.1\%$ & 473.6 \\
%& 4 & $68.33 \pm 0.9\%$ & 500.5 \\ 
%\bottomrule
%\end{tabular}
%\caption{
%    The reproduced performance of the SOTA variant of SAM, LookSAM.
%}
%\label{tab:looksam}
%\end{table}

\end{document}